\documentclass{applemlr}

\usepackage{amsmath}
\usepackage{amssymb}
\usepackage{mathtools}
\usepackage{amsthm}
\usepackage{pifont} %

\newtheorem{proposition}{Proposition}[section]

\newtheorem{corollary}[proposition]{Corollary}
\newtheorem{lemma}[proposition]{Lemma}

\theoremstyle{remark}
\newtheorem{remark}[proposition]{Remark}

\tcolorboxenvironment{proposition}{
  enhanced, breakable, frame hidden,
  colback=titlebg, arc=4pt,
  left=3mm, right=3mm, top=2mm, bottom=2mm,
  before skip=8pt plus 2pt, after skip=8pt plus 2pt
}
\tcolorboxenvironment{theorem}{
  enhanced, breakable, frame hidden,
  colback=titlebg, arc=4pt,
  left=3mm, right=3mm, top=2mm, bottom=2mm,
  before skip=8pt plus 2pt, after skip=8pt plus 2pt
}
\tcolorboxenvironment{corollary}{
  enhanced, breakable, frame hidden,
  colback=titlebg, arc=4pt,
  left=3mm, right=3mm, top=2mm, bottom=2mm,
  before skip=8pt plus 2pt, after skip=8pt plus 2pt
}
\tcolorboxenvironment{lemma}{
  enhanced, breakable, frame hidden,
  colback=titlebg, arc=4pt,
  left=3mm, right=3mm, top=2mm, bottom=2mm,
  before skip=8pt plus 2pt, after skip=8pt plus 2pt
}
\tcolorboxenvironment{definition}{
  enhanced, breakable, frame hidden,
  colback=titlebg, arc=4pt,
  left=3mm, right=3mm, top=2mm, bottom=2mm,
  before skip=8pt plus 2pt, after skip=8pt plus 2pt
}
\tcolorboxenvironment{remark}{
  enhanced, breakable, frame hidden,
  colback=titlebg, arc=4pt,
  left=3mm, right=3mm, top=2mm, bottom=2mm,
  before skip=8pt plus 2pt, after skip=8pt plus 2pt
}

\newtcolorbox{keyinsight}{
  enhanced, breakable, frame hidden,
  colback=titlebg, arc=4pt,
  borderline west={2.5pt}{0pt}{bscolor},
  left=5mm, right=3mm, top=2mm, bottom=2mm,
  before skip=10pt plus 2pt, after skip=10pt plus 2pt
}

\usepackage{algorithm}
\usepackage{algpseudocode}
\usepackage{enumitem}

\usepackage{wrapfig}
\usepackage{booktabs}
\usepackage{colortbl}  %
\usepackage{etoc}

\usepackage[textsize=tiny]{todonotes}

\title{Embarrassingly Simple Self-Distillation Improves Code Generation}

\author{
  Ruixiang Zhang$^{*}$
  \quad Richard He Bai$^{*}$
  \quad Huangjie Zheng$^{*}$
  \quad Navdeep Jaitly
  \quad Ronan Collobert \\[0.5em]
  Yizhe Zhang$^{*}$
}

\newcommand{\sd}{simple self-distillation}
\newcommand{\Sd}{Simple self-distillation}
\newcommand{\SD}{SSD}
\usepackage{soul}
\newcommand{\cpanel}[2]{\hyperref[#1]{\Cref*{#1}#2}}
\newcommand{\Cpanel}[2]{\hyperref[#1]{\Cref*{#1}#2}}

\newcommand{\github}{\raisebox{-1.5pt}{\includegraphics[height=1.05em]{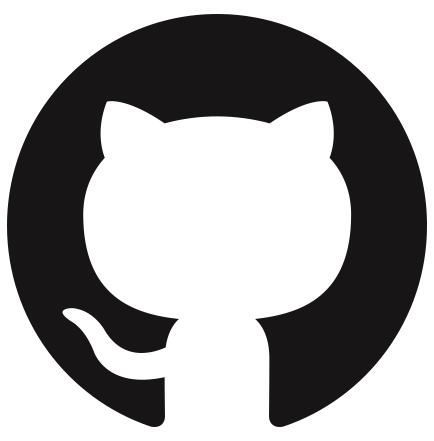}}}

\affiliation{
  \hspace{-2.5pt}Apple\\[0.5em]
  \github\quad \url{https://github.com/apple/ml-ssd}
}

\abstract{
Can a large language model~(LLM) improve at code generation using only its own raw outputs, without a verifier, a teacher model, or reinforcement learning? We answer in the affirmative with simple self-distillation (\SD): sample solutions from the model with certain temperature and truncation configurations, then fine-tune on those samples with standard supervised fine-tuning. \SD{} improves Qwen3-30B-Instruct from 42.4\% to 55.3\% pass@1 on LiveCodeBench\,v6, with gains concentrating on harder problems, and it generalizes across Qwen, Llama, and GPT-OSS models at 4B--30B scale, including both instruct and thinking variants.
To understand why such a simple method can work, we trace these gains to a \emph{precision-exploration conflict} in LLM decoding and show that \SD{} reshapes token distributions in a context-dependent way, suppressing distractor tails where precision matters while preserving useful diversity where exploration matters. Taken together, \SD{} offers a complementary post-training direction for improving LLM code generation.
}

\metadata[Correspondence]{Ruixiang Zhang (\texttt{ruixiangz@apple.com}), Richard He Bai (\texttt{richardbai@apple.com}), Huangjie Zheng (\texttt{huangjie.zheng@apple.com}), Yizhe Zhang (\texttt{yizzhang@apple.com})}
\date{\sffamily April 1, 2026\quad{\small $^{*}$ Equal contribution}}

\renewcommand{\mymaketitle}{%
  \tcbset{enhanced,frame hidden}
  \tcbset{colback=titlebg}
  \tcbset{left=0.5cm}
  \tcbset{right=0.5cm}
  \tcbset{top=0.2cm}
  \tcbset{bottom=0.3cm}
  \tcbset{arc=16pt}
  \tcbset{before skip=0pt}
  \tcbset{grow to left by=1.5pt}
  \tcbset{grow to right by=1.5pt}
  \tcbset{overlay={\node[
    anchor=north west,
    at= (frame.north west),
    xshift= 0.5cm,
    yshift= -0.5cm] {\includegraphics[width=0.5cm]{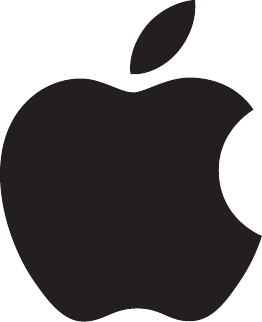}};}}
  \begin{tcolorbox}
    \setlength{\parindent}{0cm}
    \setlength{\parskip}{0.5cm}
    {
      \setlength{\parskip}{0cm}
      \raggedright
      \nohyphens
      {
        \vskip 1.2cm
        \titlelist\par
      }
      \vskip 0.35cm
      \authorlist\par
      \vskip 0.5em
      \affiliationlist\par
      \contributionlist\par
      \vskip 1.0em
      \abstractlist\par
    }
    \vskip 0.3cm
    {
      \setlength{\parskip}{4pt}
      \ifdefempty{\metadatalist}{\vspace*{0.65cm}}{\metadatalist\par}
    }
  \end{tcolorbox}
  \tcbset{reset}
  \FloatBarrier
}

\begin{document}

\maketitle

\vspace{0.1cm}
{%
\noindent
\centering
\includegraphics[width=\textwidth]{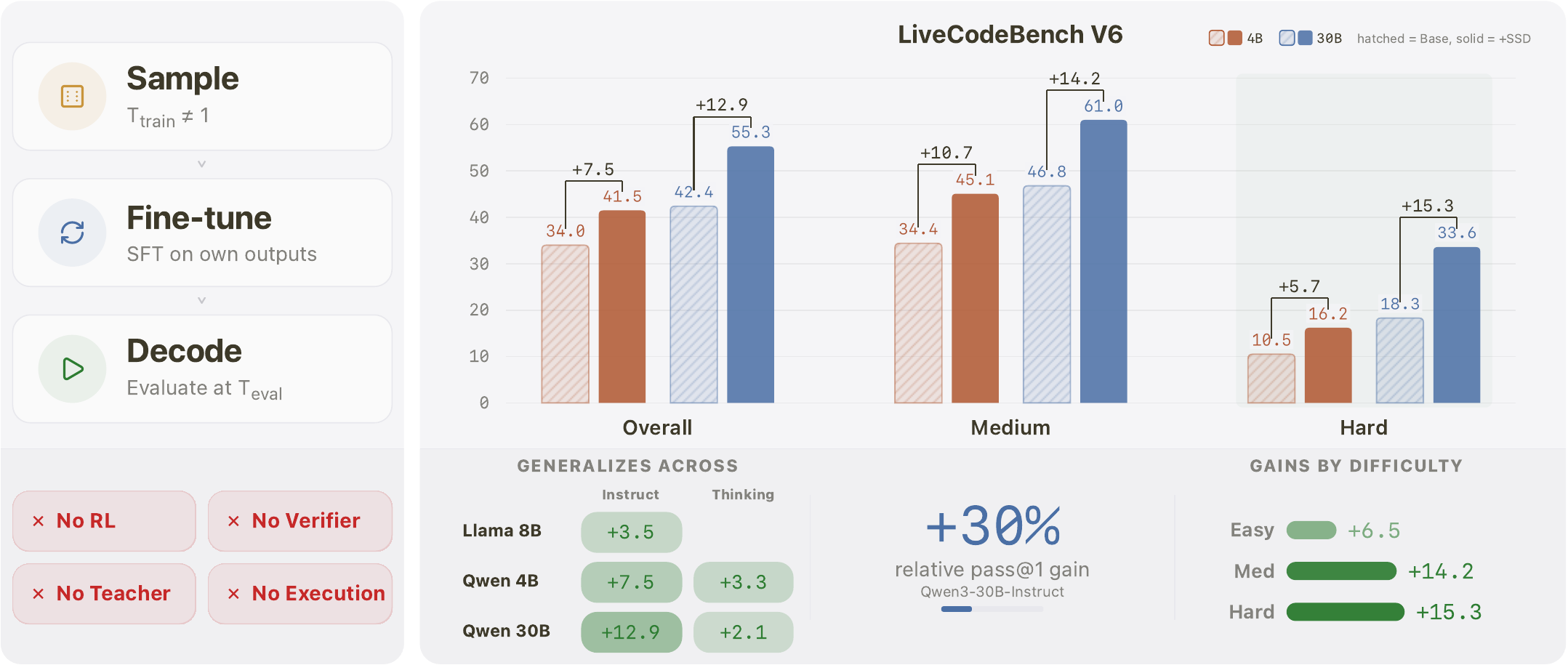}\\[0.1cm]
{\small
\captionof{figure}{\textbf{\Sd{}~(\SD) is embarrassingly simple, yet yields substantial LiveCodeBench\,v6 gains across six models spanning three families and multiple scales, with both instruct and thinking variants.}
Left: \SD{} samples from the base model with training-time decoding temperature $T_{\textsf{train}}$, fine-tunes on its own raw outputs, and then decodes at evaluation time with $T_{\textsf{eval}}$; it uses no RL, verifier, teacher, or code execution environment.
Right: LiveCodeBench\,v6 pass@1 for Qwen3-4B-Instruct and Qwen3-30B-Instruct on the Overall, Medium, and Hard splits (orange = 4B, blue = 30B; hatched = base, solid = +\SD{}).
The footer highlights the broader pattern: all six evaluated models improve, Qwen3-30B-Instruct gains +30\% relative pass@1, and the largest gains occur on harder problems.
}
\label{fig:teaser}
}%
}

\clearpage

\addtocontents{toc}{\protect\setcounter{tocdepth}{-1}}

\section{Introduction}
\label{sec:introduction}
As LLMs are deployed to increasingly difficult coding tasks, the supply of high-quality supervised signal has become a binding constraint: human-written solutions \citep{chen2021evaluating, austin2021program, hendrycks2021apps} are expensive to produce, and synthetic data pipelines require either a stronger teacher model \citep{hinton2015distilling, kim2016sequence, hsieh2023distilling, agarwal2024onpolicy} or execution-based verification for every training problem \citep{li2022alphacode, le2022coderl, singh2024beyond, liu2025rstarcoder}. Teacher-based distillation also inherits the ceiling of the teacher, while reinforcement learning with verifiable reward remains operationally complex and can be unstable, even in recent RL-based reasoning and coding pipelines \citep{he2026far, shao2024deepseekmath, guo2025deepseekr1, openai2025competitive}. Unsupervised alternatives that use intrinsic rewards such as majority voting or entropy minimization \citep{zuo2025ttrl, agarwal2025entropy} have shown early promise but face reward hacking and collapse under extended training \citep{zhang2025nofreelunch}. This raises a natural question: can a model improve itself without leveraging any external labeled data or verification at all?

We show the answer is yes. Our method, \emph{\sd} (\SD), is embarrassingly simple: sample solutions from the base model with specified temperature and truncation, then fine-tune on those raw, unverified samples via standard cross-entropy loss. This method requires only a set of problem prompts and the model itself: no human-labeled solutions, no reference answers, no teacher model, no reward model, no verifier, no execution environment, and no reinforcement learning of any kind.

\begin{table}[t]
\caption{Comparison of training paradigms.}
\label{tab:method_comparison}
\centering
\includegraphics[width=0.8\textwidth]{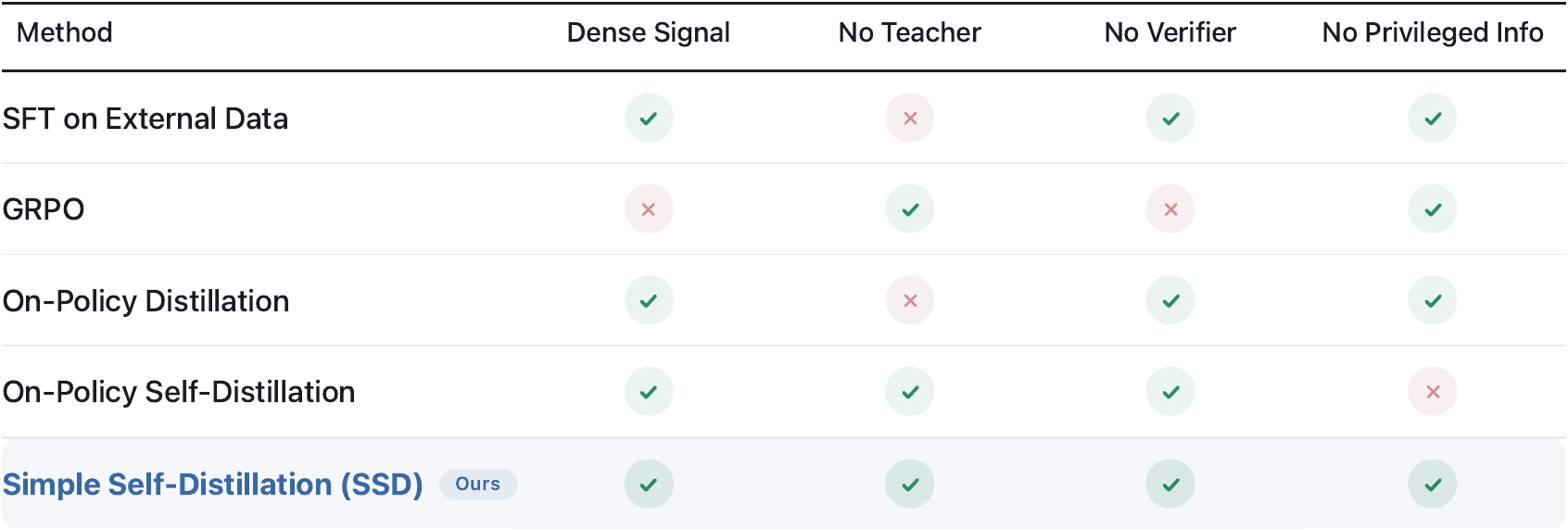}
\end{table}

Surprisingly, it works. \SD{} improves Qwen3-30B-Instruct from 42.4\% to 55.3\% pass@1 on LiveCodeBench\,v6~\citep{jain2024livecodebench}, with especially large gains on hard problems. Coverage improvements are larger still: hard-problem pass@5 rises from 31.1\% to 54.1\%, suggesting that \SD{} preserves useful exploration across solution branches instead of only sharpening a single dominant mode. These gains are not model-specific: \SD{} generalizes across six models spanning three families, multiple scales, and both instruct and thinking models.

We study why such a simple method can work in the domain of code generation, which serves as a particularly useful testbed because the task structure makes the underlying mechanism especially visible. Code interleaves \emph{fork} positions, where several continuations are genuinely plausible and may correspond to different solution approaches, with \emph{lock} positions, where syntax and semantics leave little ambiguity but a low-probability distractor tail still remains. These two context types make contradictory demands on decoding temperature~$T_{\textsf{eval}}$. Lowering~$T_{\textsf{eval}}$ secures locks but starves forks of diversity, while raising it enables exploration at forks but lets distractors flood back in at locks. The best global decoding setting is therefore necessarily a compromise; we call this tension the \emph{precision-exploration conflict}.

\SD{} can be understood through this lens. Training on temperature-shifted, truncated samples implicitly reshapes the model's distributions in a context dependent way (we formalize this later as \textit{support compression} and \textit{within support reshaping} in Equation \eqref{eq:three_term_alt}): it suppresses distractors most aggressively at locks while preserving useful diversity at forks. Evidence from controlled simulation and real-model analysis (\Cref{sec:mechanism:toy}), together with theoretical analysis (\Cref{sec:mechanism:objective}), supports this account and explains why changing only the decoding settings cannot recover the gains. More broadly, this suggests that existing code models contain capability not realized under fixed decoding alone.

Our contributions are threefold. First, we show that \SD{} can substantially improve code generation models' performance using only their own unverified outputs, without any external teacher, verifier, reward model, or labeled solutions. Second, we identify the \emph{precision-exploration conflict} and argue that it is the key mechanism behind \SD{}. Third, we support this mechanism with aligned evidence from controlled simulation, real-model analysis, and theory.

\section{Embarrassingly Simple Self-Distillation (\SD)}
\label{sec:method}

\paragraph{Data synthesis.}
We write $T$ for the temperature and $\rho$ for the truncation configuration, namely the top-$k$ and top-$p$ used in decoding (see \Cref{app:decoding_pipeline} for the precise decoding procedure).
Given a \textit{frozen} pre-trained LLM $p_{\theta}$ and a set of prompts $\mathcal{X}$, we sample $N$ candidate solutions per problem:
\begin{equation}
y \sim \textsf{Decode}_{T_{\textsf{train}},\, \rho_{\textsf{train}}}\!\big[p_{\theta}(\cdot \mid x)\big]
\end{equation}
The solutions are not verified in any way: no execution, no test cases, no filtering by correctness.
The raw outputs form the \sd{} dataset $\mathcal{D}_{\textsf{SSD}}$.
In practice, $N{=}1$ (a single sample per prompt) already suffices (\Cref{sec:experiments}).

\paragraph{Training.}
We fine-tune the model on $\mathcal{D}_{\textsf{SSD}}$ with standard supervised fine-tuning~(SFT):
\begin{equation}
\mathcal{L}(\theta) = - \mathbb{E}_{(x,y) \sim \mathcal{D}_{\textsf{SSD}}} \sum_{t=1}^{|y|} \log p_\theta(y_t \mid x, y_{<t})
\end{equation}

\paragraph{Inference.}
The fine-tuned model $p_{\theta^*}$ is deployed with evaluation-time decoding configuration $(T_{\textsf{eval}}, \rho_{\textsf{eval}})$:
\begin{equation}
\hat{y} \sim \textsf{Decode}_{T_{\textsf{eval}},\, \rho_{\textsf{eval}}}\!\big[p_{\theta^*}(\cdot \mid x)\big]
\end{equation}

\section{Experiments}
\label{sec:experiments}

\subsection{Experimental Setup}

\textbf{Models.}
We evaluate \SD{} on six models spanning three families (Llama, Qwen, GPT-OSS), scales from 4B to 30B, and two reasoning styles (instruct, thinking):
Llama-3.1-8B-Instruct (dense, 8B),
Qwen3-4B-Instruct-2507 (dense, 4B; hereafter \emph{Qwen3-4B-Instruct}),
Qwen3-4B-Thinking-2507 (dense, 4B; hereafter \emph{Qwen3-4B-Thinking}),
Qwen3-30B-A3B-Instruct-2507 (MoE, 30B total / 3B active; hereafter \emph{Qwen3-30B-Instruct}),
Qwen3-30B-A3B-Thinking-2507 (MoE, 30B / 3B active; hereafter \emph{Qwen3-30B-Thinking}), and
GPT-OSS-20B (MoE, 21B total / 3.6B active; hereafter \emph{GPT-OSS-20B}).
We apply \SD{} to each of these \emph{base} models.

\textbf{Data synthesis.}
We use the seed subset of the rSTARcoder dataset~\citep{liu2025rstarcoder}, de-duplicated to yield ${\sim}$10K unique competitive programming problems. For each prompt we sample a single solution from the base model using the per-model generation decoding configuration in \Cref{tab:sd_settings}.
We apply only minimal syntactic filtering to remove empty responses and single line stubs, meaning there is absolutely \textbf{no correctness signal} used.
Generation uses vLLM~\citep{kwon2023vllm} with 128K maximum sequence length limit.

\textbf{Training.}
We fine-tune with Megatron-LM\footnote{\url{https://github.com/NVIDIA/Megatron-LM}} on 8$\times$B200 GPUs (EP${=}$8 for MoE models), using AdamW with cosine decay (peak LR $5 \times 10^{-6}$), global batch size 32, sequence length 65{,}536, and 2{,}500 iterations for instruct models and 300 iterations for thinking models.
The learning rate is warmed-up with 250 and 50 iterations, respectively.

\textbf{Evaluation.}
Our primary benchmark is LiveCodeBench\,v6 (LCB\,v6; Feb--May 2025, following the version split adopted by recent model
  releases~\footnote{\url{https://huggingface.co/Qwen/Qwen3-4B-Instruct-2507}}; stratified by easy/medium/hard).
We report LCB\,v5 (Aug 2024--Feb 2025, following the version split adopted by rSTARcoder~\citep{liu2025rstarcoder}) as a secondary confirmation. The primary metric is pass@1; we also report pass@5 and per-difficulty breakdowns.
Base-model results use each model's officially recommended sampling parameters (\Cref{tab:official_sampling}); \SD{} models are evaluated with the decoding settings in \Cref{tab:sd_settings}. Full experimental details, prompt formatting, and decoding settings are given in \Cref{app:experimental_details}.

\subsection{\SD{} Improves Code Generation Across the Board}

\definecolor{gaingreen}{HTML}{2E7D32}   %
\definecolor{gainbg}{HTML}{4CAF50}      %
\definecolor{lossred}{HTML}{C62828}     %
\definecolor{lossbg}{HTML}{EF5350}      %

\newcommand{\gcell}[2]{%
  \ifdim #2pt>15pt \cellcolor{gainbg!30}%
  \else\ifdim #2pt>10pt \cellcolor{gainbg!22}%
  \else\ifdim #2pt>5pt \cellcolor{gainbg!14}%
  \else\ifdim #2pt>2pt \cellcolor{gainbg!8}%
  \else \cellcolor{gainbg!3}%
  \fi\fi\fi\fi
  \textbf{#1}$_{\text{\tiny\color{gaingreen}\!+#2}}$%
}
\newcommand{\gcellneg}[2]{\cellcolor{lossbg!12}#1$_{\text{\tiny\color{lossred}\!#2}}$}
\newcommand{\gcellzero}[1]{\cellcolor{gainbg!3}\textbf{#1}}

\begin{table}[t]
\caption{\textbf{\SD{} improves every evaluated model on LiveCodeBench, with the largest gains on medium and hard problems.} Results on LCB\,v6 and LCB\,v5, broken down by difficulty and grouped by reasoning style (thinking vs.\ instruct). Within each model pair, the first row is the base model and the second is +\SD{}; cell shading encodes the change relative to the base row ({\color{gaingreen}green} = improvement, {\color{lossred}red} = decrease, no shading = no change, $\Delta\approx0$).}
\label{tab:main_results_grouped}
\centering
\includegraphics[width=\textwidth]{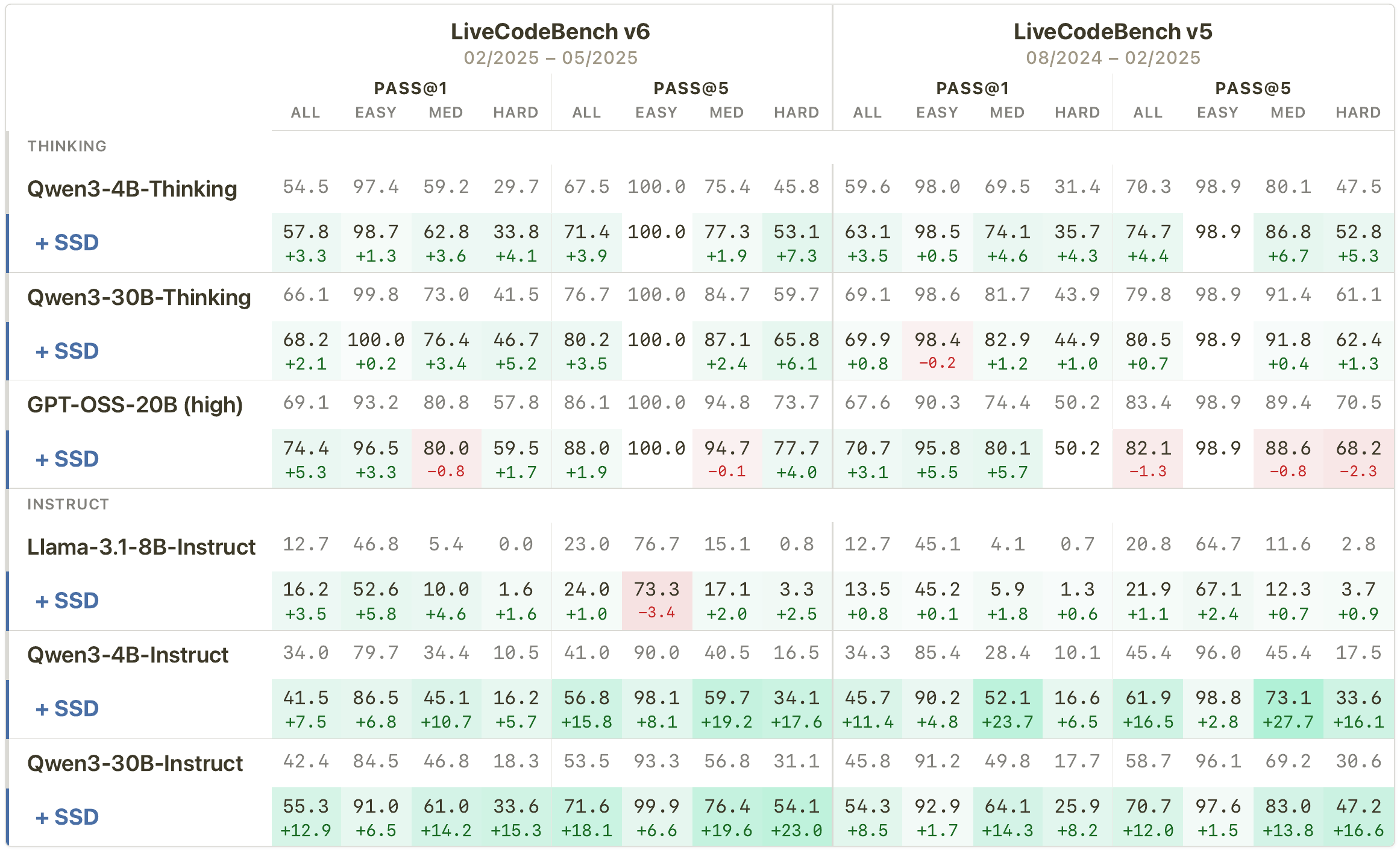}
\end{table}

\textbf{\SD{} yields large gains on LiveCodeBench.}
On LCB\,v6, \SD{} improves Qwen3-30B-Instruct from 42.4\% to 55.3\% pass@1 (+12.9pp, +30.4\% relative).
The gains are broad across the evaluated models: Llama-8B improves by +3.5pp, Qwen3-4B-Instruct by +7.5pp, Qwen3-4B-Thinking by +3.3pp, Qwen3-30B-Thinking by +2.1pp, and GPT-OSS-20B by +5.3pp.
Substantial gains also appear on the larger 374-problem LCB\,v5, where Qwen3-30B-Instruct improves from 45.8\% to 54.3\% pass@1 (+8.5pp).
For a recipe based only on self-generated, unverified solutions and standard supervised fine-tuning, these are consistently strong improvements.

\textbf{\SD{} helps most on harder problems.}
For Qwen3-30B-Instruct on LCB\,v6, pass@1 improves by +6.5pp on easy problems, +14.2pp on medium problems, and +15.3pp on hard problems.
The same concentration appears at pass@5, where the gains are +6.6pp on easy, +19.6pp on medium, and +23.0pp on hard.
A similar pattern is visible in the Qwen thinking models: for Qwen3-4B-Thinking, hard pass@1 improves by +4.1pp versus +1.3pp on easy problems, and for Qwen3-30B-Thinking, the corresponding gains are +5.2pp versus +0.2pp.
Across \Cref{tab:main_results_grouped}, the medium and hard splits consistently account for the largest absolute gains.

\textbf{\SD{} does not collapse diversity.}
A second clear pattern in the table is that the gains are often larger at pass@5 than at pass@1, indicating that \SD{} preserves and even improves generation diversity.
Across the Qwen models on LCB\,v6, the pass@5 gain exceeds the corresponding pass@1 gain: +15.8pp versus +7.5pp for Qwen3-4B-Instruct, +3.9pp versus +3.3pp for Qwen3-4B-Thinking, +18.1pp versus +12.9pp for Qwen3-30B-Instruct, and +3.5pp versus +2.1pp for Qwen3-30B-Thinking.
For Qwen3-30B-Instruct, the same asymmetry is even larger on the hard subset, where pass@5 rises by +23.0pp while pass@1 rises by +15.3pp; on LCB\,v5, pass@5 likewise improves more than pass@1 (+12.0pp versus +8.5pp).
These larger pass@5 gains are consistent with improved diversity across generation samples.

Because \SD{} is trained only on competitive-programming data, a natural concern is that it may hurt performance outside that domain. To test this, we evaluate the same trained model out of the box on benchmarks for math reasoning, general code generation, and code understanding, without any additional training or adaptation. Performance remains broadly stable for the 30B models, see \Cref{sec:transfer}.

\subsection{Global Decoding Policies Cannot Match \SD{}}
\label{sec:decode_only}

\begin{figure}[t]
\centering
\includegraphics[width=\textwidth]{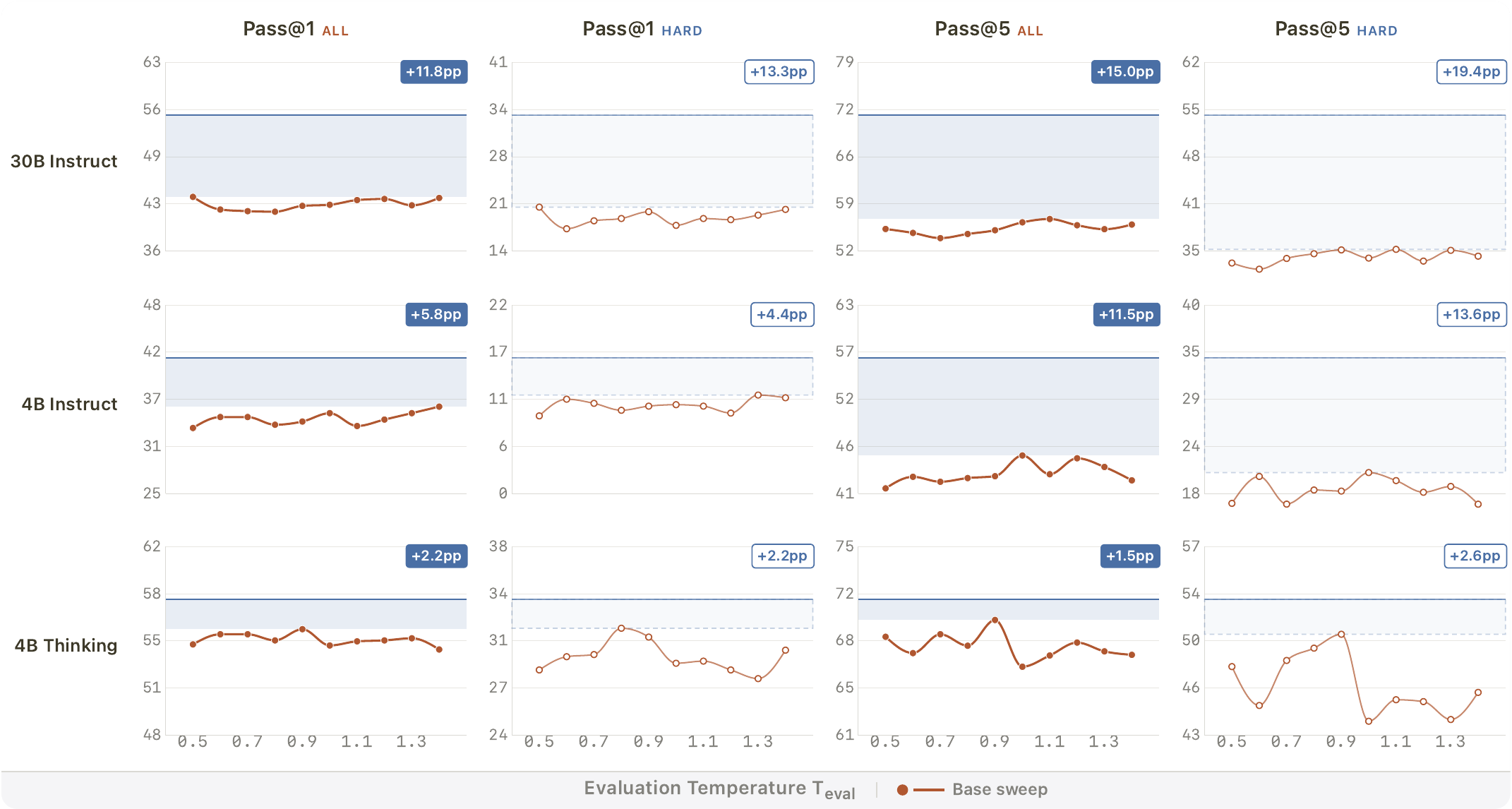}
\caption{\textbf{\SD{} outperforms the best point in the evaluated base-model decoding sweep within standard global decoding policies.}
Each panel shows one model (30B-Instruct, 4B-Instruct, 4B-Thinking) and one metric (pass@1 or pass@5); amber curves sweep base-model evaluation temperature while blue horizontal lines mark \SD{} results from~\Cref{tab:main_results_grouped}. Solid shading marks the margin over all problems; outlined (dashed-border) shading marks the margin on hard problems.
}
\label{fig:ssd_vs_sweep}
\end{figure}

Could the gains in \Cref{tab:main_results_grouped} be recovered \emph{without} training, by tuning only evaluation-time decoding on the original base model? We test this within the standard family of global temperature and truncation $(T_{\textsf{eval}}, \rho_{\textsf{eval}})$ decoding policies, sweeping evaluation-time decoding settings on the base model and comparing the best evaluated base-model configuration to \SD{}. For this comparison, we follow the officially recommended sampling settings for each model (see \Cref{tab:official_sampling}) and sweep temperature extensively to probe the strongest decode-only performance achievable by the base model. \Cref{fig:ssd_vs_sweep} visualizes representative temperature sweeps and the remaining gap to \SD{}.

\textbf{Temperature tuning yields only modest gains on the base model.}
The base-model sweep curves are strikingly flat: for Qwen3-30B-Instruct, pass@1 ranges from 41.3\% to 43.5\% across the evaluated temperatures, a spread of only 2.2\,pp.
The other models show similarly narrow ranges (1.5--3.0\,pp; \Cref{fig:ssd_vs_sweep}).

\textbf{\SD{} still outperforms the best-tuned base model, especially on hard problems and at pass@5.}
Comparing \SD{} against the best-tuned base model, pass@1 advantages remain: +11.8\,pp for Qwen3-30B-Instruct, +5.8\,pp for Qwen3-4B-Instruct, +2.2\,pp for Qwen3-4B-Thinking, and +1.1\,pp for Qwen3-30B-Thinking.
The gap widens on hard problems: for Qwen3-30B-Instruct, \SD{} exceeds the best-tuned base model by +13.3\,pp on hard pass@1 and +19.4\,pp on hard pass@5, both larger than the corresponding all-problem margins.
This pattern holds across all models, the \SD{} advantage is consistently largest on hard problems at pass@5.
These persistent margins indicate that \SD{} produces changes in the model itself in ways no decoding configuration can replicate, an effect we investigate in \Cref{sec:mechanism}.

\subsection{How \SD{} Hyperparameters Interact}
\label{sec:temperature}

To understand the best configuration of training and inference hyperparameters for \SD{}, we performed a grid search over $T_{\textsf{train}}$, and evaluated each checkpoint at multiple $T_{\textsf{eval}}$, with Qwen3-30B-Instruct on LCB\,v6.
We compare two regimes: a no-truncation ablation, where temperature composition is cleanest, and the full truncated setting, where training-time truncation provides an additional gain channel. Full details are in \Cref{app:hyperparameter_sweep}.

\begin{figure}[t]
\centering
\includegraphics[width=\textwidth]{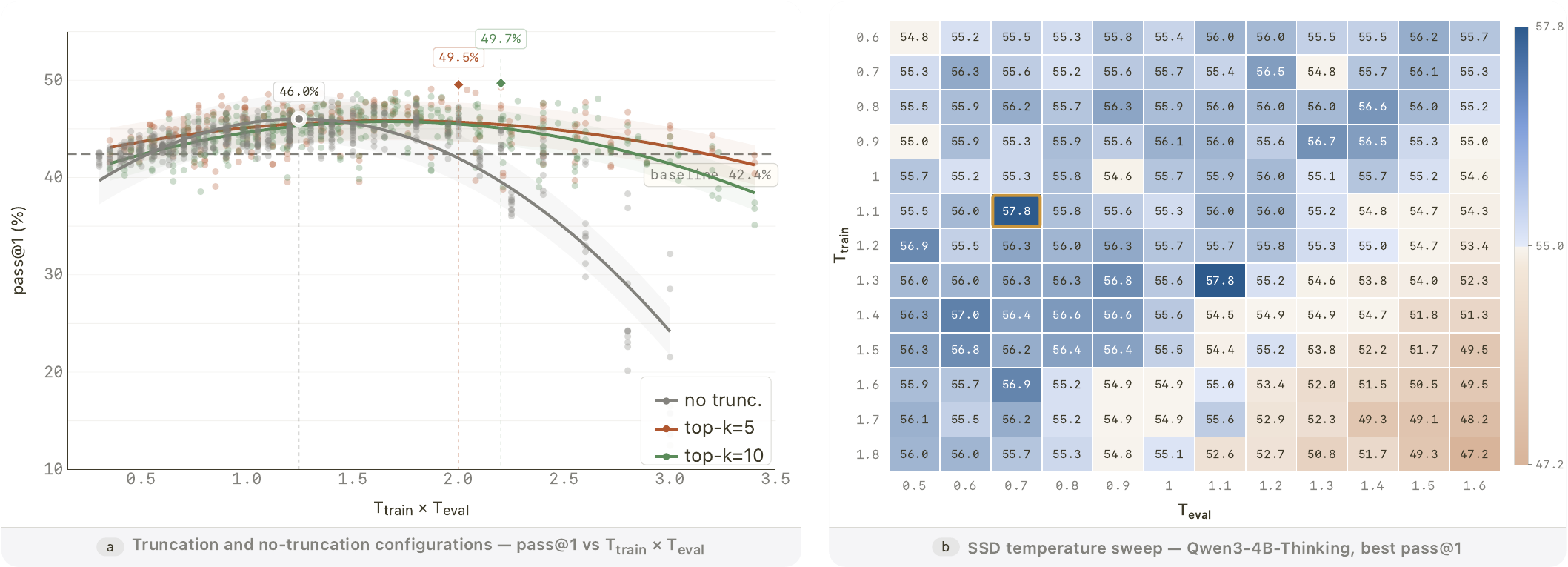}
\caption{\textbf{Training and evaluation temperatures compose through a broad effective-temperature band, while truncation raises the achievable pass@1 within that band.}
\textbf{(a)}~Representative Qwen3-30B-Instruct sweeps on LCB\,v6 against $T_{\textsf{eff}} = T_{\textsf{train}}T_{\textsf{eval}}$: gray = no truncation, amber/green = truncated training-time sampling. Dots are runs, curves are quadratic fits, and the dashed line marks the 42.4\% baseline.
\textbf{(b)}~Qwen3-4B-Thinking on LCB\,v6 with truncation, shown as best pass@1 across iterations over $(T_{\textsf{train}}, T_{\textsf{eval}})$.}
\label{fig:scatter_heatmap_combined}
\end{figure}

\textbf{Without truncation, effective temperature organizes performance.}
$T_{\textsf{train}}$ and $T_{\textsf{eval}}$ trade off each other: $T_{\textsf{train}}$ controls how strongly \SD{} reshapes the model distribution, while $T_{\textsf{eval}}$ controls how aggressively decoding exploits that reshaped distribution. Define $T_{\textsf{eff}} = T_{\textsf{train}} \cdot T_{\textsf{eval}}$; we show that $T_{\textsf{eff}}$ governs performance.
To isolate this, we run a search with only temperature scaling ($T_{\textsf{train}} \in \{0.5, 0.7, 1.0, 1.5, 2.0\}$ and $T_{\textsf{eval}} \in [0.6, 1.5]$; \cpanel{fig:scatter_heatmap_combined}{a},\Cref{fig:heatmap_no_trunc}), and no truncation $\rho_{\textsf{train}}$.
In this regime, the two temperatures compose cleanly: configurations are well governed by $T_{\textsf{eff}}$, with $R^2{=}0.75$ and a quadratic peak near $T_{\textsf{eff}} \approx 1.2$, as formalized in \Cref{app:temp_composition}.
This also explains why higher $T_{\textsf{train}}$ makes the model more responsive to $T_{\textsf{eval}}$: stronger training-time reshaping creates more room for evaluation-time decoding to trade off precision against diversity. Intuitively, higher $T_{\textsf{eff}}$ is preferred to have more diverse generation as long as the generation is not broken.

\textbf{With truncation, the performance ceiling rises.}
When a nontrivial training-time truncation configuration $\rho_{\textsf{train}}$ is used during \SD{} data generation, the truncated runs (amber/green in \cpanel{fig:scatter_heatmap_combined}{a}) remain above the baseline across a wider range of $T_{\textsf{eff}}$ than the no-truncation runs (gray), though the exact collapse onto $T_{\textsf{eff}}$ no longer holds.
This is expected: training-time truncation adds a second improvement channel on top of temperature composition by suppressing low-probability tails during data synthesis.
Among the truncated runs, the best observed setting uses $T_{\textsf{train}}{=}2.0$, $T_{\textsf{eval}}{=}1.1$, and training-time top-$k{=}10$, reaching 49.7\% pass@1 (+7.3\,pp), above all no-truncation results. As expected, the optimal $T_{\textsf{eff}}$ generally shifts towards a higher temperature with more stringent truncation.
Similarly, the diagonal-band pattern appears for Qwen3-4B-Thinking (\cpanel{fig:scatter_heatmap_combined}{b},\Cref{fig:heatmap_thinking}), confirming that the temperature-composition structure extends to thinking models.

\section{Why \SD{} Works}
\label{sec:mechanism}

The gains above raise a natural question: what changes inside the model during \sd{}, and why can't the same effect be achieved by simply adjusting how the original model decodes?
Our hypothesis is that the answer lies in a structural conflict in generation.
Some tokens demand precision, others demand exploration, and any fixed decoding configuration must compromise between them.
\SD{} helps by reshaping token distributions in a way that alleviates this conflict. We validate this mechanism in three steps: a controlled toy simulation, real model analysis, and theoretical decomposition.

\begin{figure}[t]
\centering
\includegraphics[width=\textwidth]{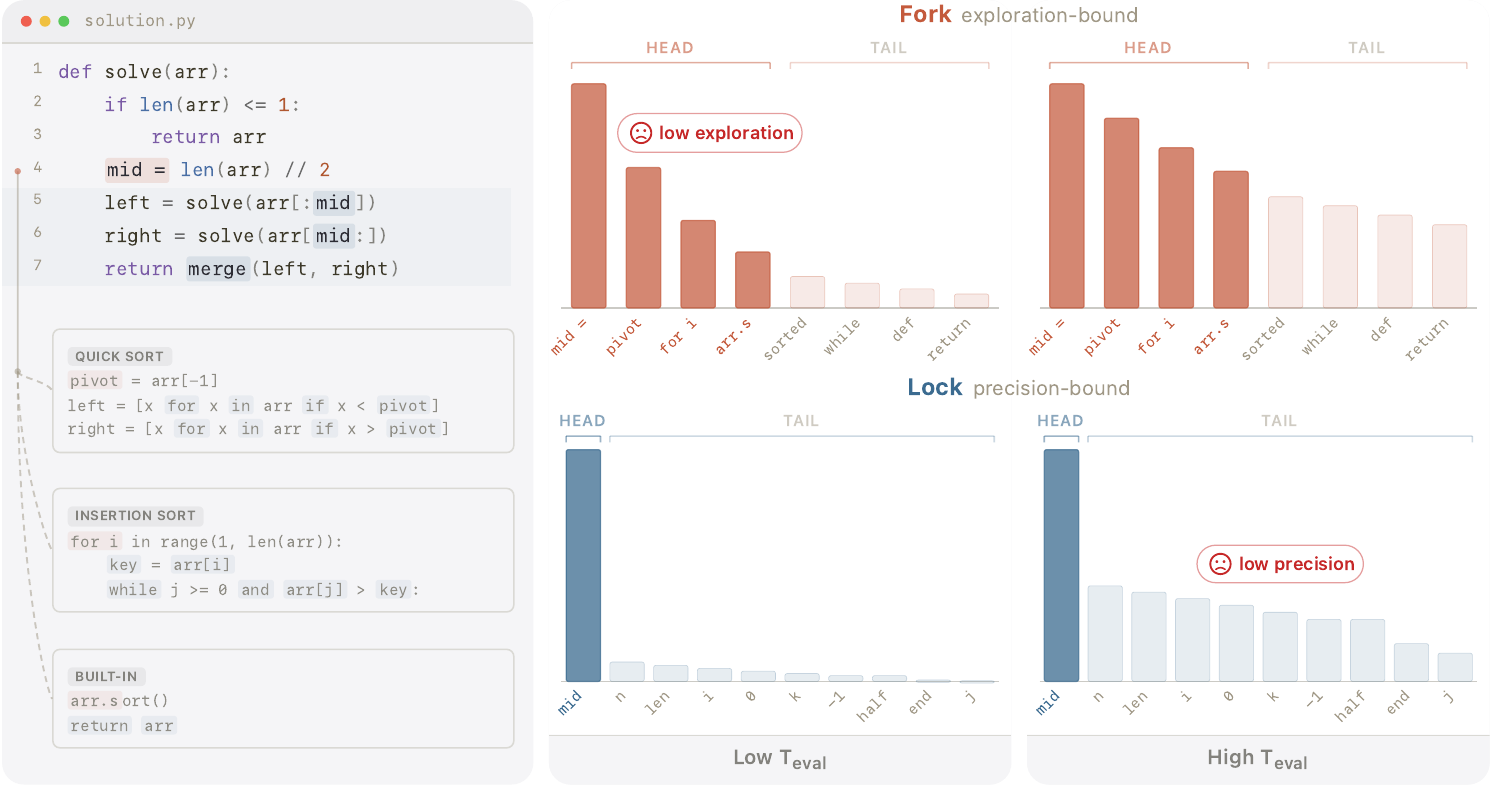}
\caption{\textbf{A single evaluation temperature cannot satisfy both exploration at forks and precision at locks.}
\textbf{Left:} a sorting example in which the algorithm-choice token is a \emph{fork} position (rust-orange), while the later uses of \texttt{mid} are \emph{lock} positions (blue); gray ghost branches indicate other valid algorithms that could have been taken at the fork.
\textbf{Right:} token distributions for the same two context types under low and high $T_{\textsf{eval}}$, with head and tail mass shown explicitly.
Low $T_{\textsf{eval}}$ keeps the lock precise but collapses the fork's viable head (\emph{low exploration}); high $T_{\textsf{eval}}$ restores exploration at the fork but revives the lock's distractor tail (\emph{low precision}).}
\label{fig:tension}
\end{figure}

\subsection{The Precision-Exploration Conflict Hypothesis}
\label{sec:mechanism:conflict}

We will take code generation as an example.
At certain positions, syntax and context leave almost no ambiguity: after \texttt{if n ==}, the model must produce a specific value, and it knows which one, yet a long tail of syntactically plausible alternatives still carries nontrivial probability mass.
At other positions, the distribution is genuinely spread across multiple viable continuations: when beginning the body of a function, the model might open with a \texttt{for} loop, a recursive call, or a data-structure initialization, each leading to a fundamentally different solution.
We hypothesize that these two kinds of positions make fundamentally contradictory demands on the decoding configuration (\Cref{fig:tension}).

We call the first a \textbf{lock}: a position where the distribution is sharply peaked, with very few tokens carrying most of the mass and a long distractor tail carrying the rest.
We call the second a \textbf{fork}~\citep{bigelow2025forking,wang2025beyond8020}: a position where the distribution is spread across multiple plausible tokens that can lead to meaningfully different downstream continuations.
Locks demand precision: commit to the dominant token and suppress the tail.
Forks demand exploration: spread mass across viable alternatives to avoid missing the good paths.

Under this view, inference temperature $T_{\textsf{eval}}$ is what makes the conflict irreconcilable.
Scaling by $T_{\textsf{eval}}$ flattens or sharpens the entire distribution $p_T(v) \propto p(v)^{1/T}$: higher $T_{\textsf{eval}}$ compresses probability gaps, pulling tokens toward equal footing; lower $T_{\textsf{eval}}$ widens them, amplifying the dominant peak. There is a dilemma.
Lowering temperature sharpens the peak at a lock, suppressing distractors, but starves a fork of the diversity it needs.
Raising temperature diversifies the head at a fork, giving lower-ranked correct continuations a chance, but destabilizes locks as the distractor tail regains mass.
The best global setting, applied to every context in the sequence, is therefore necessarily a compromise: the temperature that helps forks is precisely what lets distractors resurface at locks.

If this picture is right, then \SD{} should not sharpen the model uniformly: it should suppress distractor tails at locks while leaving more useful room for exploration at forks.

\subsection{How \SD{} Reshapes a Model: Toy Simulation and Real-Model Analysis}
\label{sec:mechanism:toy}
\label{sec:mechanism:training}
\label{sec:mechanism:decoding}

We now test that prediction in two settings. We begin with a toy environment where the conflict is explicit and success can be computed exactly. We then ask whether the same qualitative pattern appears in a real model.

\paragraph{Controlled simulation.}
We begin with a minimal environment that contains exactly the structure in our hypothesis. Successful trajectories must pass through one fork state and then three lock states before reaching PASS; any wrong decision leads to FAIL. At the fork, several continuations are genuinely plausible. At each lock, one token is correct but a distractor tail remains. Because every transition is specified explicitly, the probability of success can be computed in closed form for any decoding temperature (full details are given in \Cref{app:simulation_details}).

Even in this minimal setting, the same dilemma appears. Sweeping a single global decoding temperature on the base model recovers the same tradeoff as in \Cref{sec:mechanism:conflict}: colder decoding protects the locks but starves the fork, while hotter decoding helps the fork but breaks the locks (\Cref{fig:v19_storyboard}). The base model therefore operates at a narrow compromise.

In the toy, \SD{} changes that compromise by reshaping the two regimes differently (\Cref{fig:spike_plateau}). At lock-like states, the low-probability tail is stripped away, so the dominant token becomes much harder to dislodge. At fork-like states, several plausible continuations remain near the top, but the useless tail is reduced and the surviving options become more even. These local changes widen the viable decoding regime itself: after \SD{}, the best decoding temperature shifts much higher, and success probability rises substantially.

\begin{figure}[t]
\centering
\includegraphics[width=\textwidth]{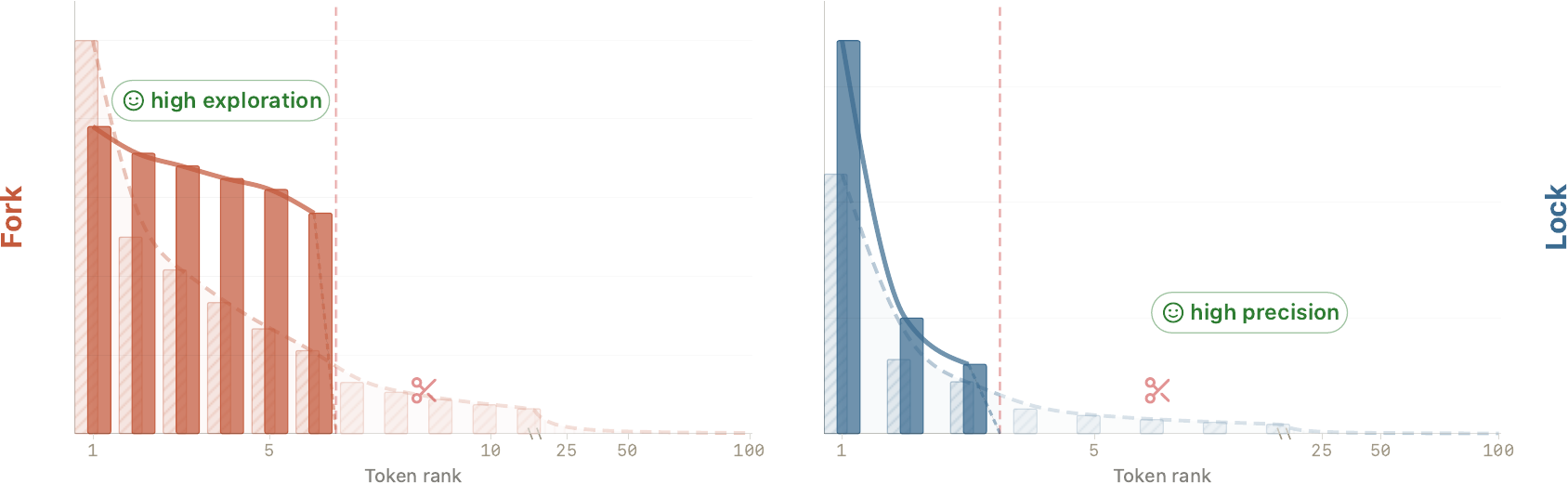}
\caption{\textbf{\SD{} turns forks into plateaus and locks into spikes.}
Tokens are ranked by probability. Hatched bars and dashed curves show the base model; solid bars and solid curves show the model after \SD{}; the red dashed cutoff marks the support retained during \SD{}.
\textbf{(a)}~Fork-like state: the diffuse tail is trimmed, but several top continuations remain and become more evenly weighted, forming a broad plateau over viable branches.
\textbf{(b)}~Lock-like state: the same rule trims the tail much more aggressively and concentrates mass on the dominant token, producing a sharper spike.}
\label{fig:spike_plateau}
\end{figure}

\paragraph{The synergy between training and decoding.}
The toy also shows why training and decoding are complementary rather than interchangeable. Training does not solve the fork by itself; it makes the locks less fragile. Decode-only temperature tuning does not clean up the locks by itself; it spends precision before it gains enough exploration at the fork. The improvement comes only when both stages act together. Training changes the distribution so the locks are safer; decoding then uses that extra room to explore the fork.

\paragraph{Real-model evidence.}
We now look for the same pattern from the base Qwen3-30B-Instruct model and its \SD{} counterpart on LCB\,v6. The same two signatures appear. Relative to the base model, \SD{} reaches decoding with a cleaner head and a weaker distractor tail.

\Cpanel{fig:empirical_triptych}{a} shows the first effect directly. When tokens are ordered by probability, cumulative mass rises more quickly for \SD{} through the top ranks. Less probability is left behind in diffuse distractor tails before decoding even begins. This is the real-model analogue of the lock side of the toy.

\Cpanel{fig:empirical_triptych}{b--d} show the second effect. Under the same evaluation-time decoding temperature and truncation $(T_{\textsf{eval}}, \rho_{\textsf{eval}})$, raising $T_{\textsf{eval}}$ changes the base model much less: the surviving set stays close to a singleton, so temperature has limited leverage. \SD{} behaves differently. As temperature rises, several top continuations remain viable, and the probabilities among those surviving options spread out much more strongly. This advantage persists even when the two models place similar probability mass in their top 20 tokens. The real-model evidence therefore matches the toy: \SD{} removes distractor mass where commitment matters and enlarges the region in which temperature can be used for exploration.

\begin{figure}[t]
\centering
\includegraphics[width=\textwidth]{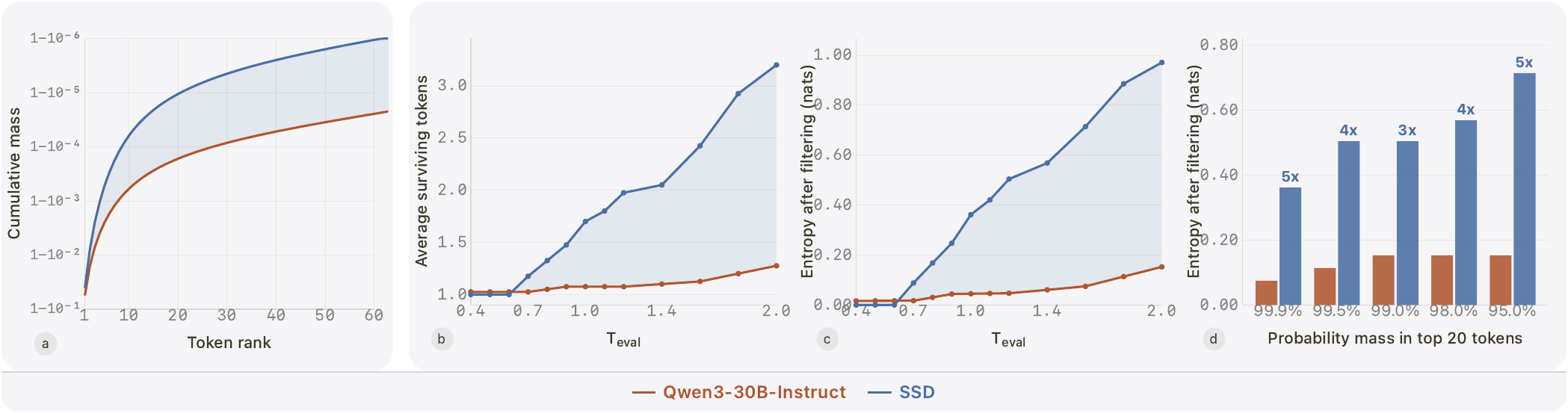}
\caption{\textbf{Real-model evidence that \SD{} both compresses distractor tails and makes $T_{\textsf{eval}}$ more effective near the head.}
Amber: base Qwen3-30B-Instruct; blue: after \SD{}.
\textbf{(a)}~When tokens are sorted by model probability, cumulative mass rises faster for \SD{}, indicating a cleaner head and weaker diffuse tail.
\textbf{(b)}~As $T_{\textsf{eval}}$ increases, more tokens survive truncation in \SD{} than in the base model.
\textbf{(c)}~The entropy of the distribution after truncation increases much more strongly for \SD{}.
\textbf{(d)}~This higher entropy after truncation persists even when the two models place similar probability mass in their top 20 tokens, providing more viable alternatives for evaluation time exploration.
Together, the base model enters decoding with more tail mass, while \SD{} offers more usable room for temperature to diversify the top of the distribution.}
\label{fig:empirical_triptych}
\label{fig:coverage}
\label{fig:pd_temperature}
\end{figure}

The toy isolates the mechanism, and the real-model analysis shows the same mechanism in practice. \SD{} does not remove the conflict by making every context uniformly sharper. It relaxes the conflict asymmetrically: forks retain more usable alternatives near the top of the distribution, while locks become safer. That is why higher-temperature decoding becomes newly effective after training. The next subsection formalizes why these two changes can coexist: reduced tail mass where precision matters and more usable diversity near the top where exploration matters.

\subsection{A Theoretical View of \SD{}}
\label{sec:mechanism:objective_alt}

We now turn to the theoretical view behind that picture.
\SD{} fits the distribution induced by sampling the base model with $T_{\textsf{train}}$ and $\rho_{\textsf{train}}$. That shift in the training signal leads to the objective decomposition below, explains why forks and locks respond differently, clarifies the entropy picture, and also explains why decode-only tuning cannot reproduce the same effect (\Cref{app:theory_new:setup,app:theory_new:signal,app:theory_new:locks_forks,app:temp_composition,app:theory_new:entropy,app:theory_new:decode_only}).

\paragraph{\SD{} induces support compression and within-support reshaping.}
We begin with the distribution that \SD{} fits. During data synthesis, we sample from the base model under $(T_{\textsf{train}}, \rho_{\textsf{train}})$. At any context, this procedure produces a retained set $S$ of tokens that survive temperature scaling and truncation, together with a renormalized distribution $q$ over that set. Let $\textsf{KeptMass}_\theta$ denote the probability mass that the model under optimization assigns to $S$, and write $T \equiv T_{\textsf{train}}$. With this notation, the induced loss can be written as

\begin{equation}
\label{eq:three_term_alt}
\mathcal{L}(\theta)
\;=\;
\underbrace{-\log \textsf{KeptMass}_\theta}_{\textup{support compression (via $\rho_{\textsf{train}}$)}}
\;+\;
\underbrace{(1-T)\,H_{1/T}\!\bigl(p_\theta(\cdot \mid S)\bigr)}_{\textup{within-support reshaping (via $T_{\textsf{train}}$)}}
\;+\;
\underbrace{T \cdot \mathrm{KL}\!\bigl(q \,\|\, p_{\theta,T}(\cdot \mid S)\bigr)}_{\textup{alignment to the base model}}
\;+\;
\mathrm{const},
\end{equation}

Here $H_{1/T}(\pi)$ is the R\'enyi entropy of order $1/T$, and $p_{\theta,T}(\cdot \mid S)$ is the model's tempered distribution restricted to $S$. The three terms have clear roles: the first term drives support compression, which removes diffuse tail mass to concentrate probability on a smaller set of viable tokens, the second reshapes the head within that set, and the third keeps that reshaping aligned with the base model on that same set (\Cref{app:theory_new:setup,app:theory_new:signal}). This decomposition is central because it shows simple self distillation is not mere imitation; it enforces both \textit{support compression} and \textit{head reshaping}.

\paragraph{\SD{} sharpens locks while preserving forks.}
Once written this way, the lock/fork asymmetry follows from what survives into the retained set at each type of context. At a lock, only one or a few tokens survive truncation, so support compression dominates: distractor mass is pushed out of the tail and the surviving head becomes relatively insensitive to $T_{\textsf{eval}}$. At a fork, several plausible continuations survive, so within-support reshaping has room to flatten and preserve the head without reopening the discarded tail. Appendix \Cref{app:theory_new:locks_forks,app:temp_composition} formalizes this asymmetry and shows how training-time and evaluation-time temperatures compose inside the retained set.

\vspace{-4pt}
\paragraph{\SD{} lowers total entropy while preserving head exploration.}
The fine-tuned model can become globally sharper while remaining more explorable at evaluation time because total entropy and useful exploration concern different objects. Appendix \Cref{app:theory_new:entropy} decomposes full-vocabulary entropy into a gate term, a head term, and a tail term: \SD{} lowers the gate and tail contributions by concentrating mass on the retained set, while the conditional head can remain broad enough at fork-like contexts for $T_{\textsf{eval}}$ to diversify among viable continuations.

\vspace{-4pt}
\paragraph{Understanding why decode-only tuning cannot match \SD{}.}
Appendix \Cref{app:theory_new:decode_only} shows that decode-only $(T_{\textsf{eval}}, \rho_{\textsf{eval}})$ policies remain constrained by the base model's existing ranking and cumulative curves: they can reweight a fixed distribution, but they cannot steepen locks and clean up fork heads in a context-dependent way. \SD{} changes the distribution itself, which is why the empirical decode-only gap in \Cref{sec:decode_only} persists.

\subsection{A Surprising Case: Bad Data, Good Results}
\label{sec:high_temp}

\begin{wrapfigure}[30]{r}{0.34\textwidth}
\vspace{-14pt}
\centering
\includegraphics[width=\linewidth]{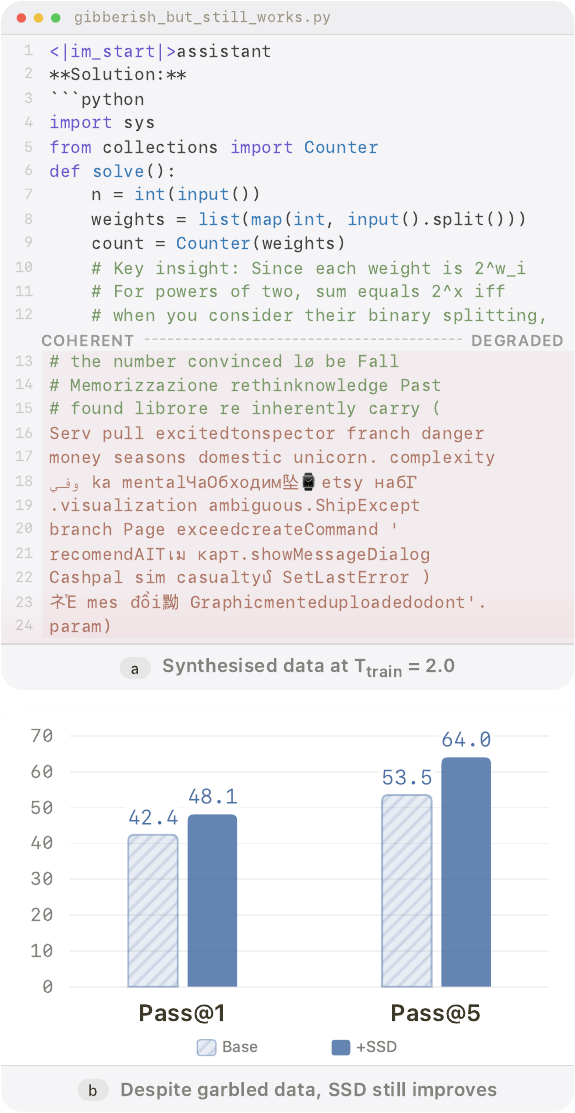}
\vspace{-8pt}
\captionsetup{font=footnotesize}
\caption{\textbf{Bad data, good results.}
\textbf{(a)}~At $T_{\textsf{train}}{=}2.0$ without truncation, a representative sample degrades into gibberish; ${\sim}62\%$ of outputs contain no extractable code.
\textbf{(b)}~The fine-tuned model still surpasses the 42.4\%/53.5\% base-model pass@1/pass@5, reaching 48.1\% and 64.0\%.}
\label{fig:high_temp}
\end{wrapfigure}

We now push \SD{} into an intentionally pathological regime as a stress test for our established hypothesis and understanding, that \SD{} makes high-temperature $T_{\textsf{eff}}$ possible and beneficial, as well as that training and decoding are complementary to each other in \SD{}.
Starting from Qwen3-30B-Instruct, we raise the training temperature to $T_{\textsf{train}}{=}2.0$ and disable truncation entirely (setting $\rho_{\textsf{train}}$ to be vacuous), asking whether \SD{} still helps when the sampled training outputs are overwhelmingly poor as programs.
If the benefit of \SD{} depended primarily on training on good solutions, this setting should be close to a failure case. More details are in \Cref{app:high_temp_details}.

\paragraph{In this stress test, the synthesized data is almost gibberish.}
Without truncation to suppress the tail, sampling at $T_{\textsf{train}}{=}2.0$ produces outputs that are often unusable as code.
About ${\sim}62\%$ contain no extractable code at all, and even seemingly coherent solutions frequently devolve into multilingual gibberish mid-sequence (\cpanel{fig:high_temp}{a}).
By ordinary data-quality standards, this is unusable as training data for SFT.

\paragraph{\SD{} still improves the model materially.}
Even when the synthesized outputs devolve into gibberish, the resulting fine-tuned model is not merely salvageable, it improves substantially.
\SD{} improves the model to 48.1\% pass@1 and 64.0\% pass@5, for gains of +5.7\,pp and +10.5\,pp respectively (\cpanel{fig:high_temp}{b}).
This peak is not an isolated lucky cell: it sits inside a contiguous late-training ridge around $T_{\textsf{eval}} \in [0.8, 1.1]$, with several neighboring checkpoint and temperature pairs remaining within about 1 to 2\,pp of the optimum (\cpanel{fig:high_temp_expanded}{b}).
As in earlier sections, the improvements are concentrated on hard problems: at the best setting, hard pass@1 increases by +7.3\,pp and hard pass@5 by +13.8\,pp.
This demonstrates that \textit{support compression} and \textit{distribution reshaping} extract useful learning signals regarding token quality, meaning program correctness might not mainly drive the gains.

\paragraph{The gain depends on evaluation-time truncation.}
\Cpanel{fig:high_temp_expanded}{b} shows a clear bounded operating region: within the viable low-$T_{\textsf{eval}}$ band, the best results form a late-training ridge rather than a single spike, but performance still degrades sharply once $T_{\textsf{eval}}$ becomes too high.
That pattern is consistent with the idea that training alone is not enough here: without truncation during training, diffuse distractor tails remain and must be cleaned up at evaluation time by $\rho_{\textsf{eval}}$.
This is also why the gains remain smaller than those of the headline truncated setting.
Taken together, the case study suggests that \SD{} is not drawing its benefit mainly from training on correct code.
Even in this pathological regime, the useful signal still comes from how high-temperature sampling reshapes token probabilities, while decoding-time truncation recovers enough precision to make that reshaping useful.

\section{Related Work}
\label{sec:related}

\paragraph{Self-training and self-distillation.}
Learning from model-generated targets has long been studied in self-training and distillation, including classical self-training, knowledge distillation, sequence-level distillation, and self-distillation \citep{amini2022self-training, he2020revisiting, hinton2015distilling, kim2016sequence, furlanello2018born}. In language modeling, recent work extends this paradigm to on-policy distillation and related self-distillation variants that supplement self-generated sequences with privileged information, textual or verbal feedback, additional context, or interaction signals \citep{agarwal2024onpolicy, zhao2026selfdistilled, hubotter2026sdpo, song2026rltf, xiong2026ovd, penaloza2026pidistill, ye2026opcd, shenfeld2026sdft, buening2026userinteractions, stein2026gates}. In contrast, \SD{} uses only temperature-shifted samples from the base model and standard cross-entropy training, without privileged context, feedback-conditioned teachers, or auxiliary supervision.

\paragraph{Code generation and synthetic data.}
In code generation, synthetic-data pipelines often rely on large-scale sampling followed by filtering, clustering, verification, or execution feedback \citep{li2022alphacode, le2022coderl, liu2025rstarcoder}. Related self-training approaches such as STaR and ReST$^{EM}$ likewise convert self-generated outputs into supervision through correctness-based filtering or external feedback \citep{zelikman2022star, singh2024beyond}. \SD{} differs in that it trains directly on raw, unverified model outputs.

\paragraph{Reasoning and RL for math and coding.}
Recent progress on reasoning and code generation has come from chain-of-thought prompting, zero-shot reasoning prompts, self-consistent sampling, self-bootstrapping, and RL-based post-training for math and code \citep{wei2022chainofthought, kojima2022zeroshot, wang2023selfconsistency, zelikman2022star, shao2024deepseekmath, guo2025deepseekr1, openai2025competitive}. A complementary line of work studies reasoning improvement at the token level, identifying critical, high-entropy, or forking tokens as disproportionately important decision points in reasoning and RL trajectories \citep{bigelow2025forking, lin2024critical, vassoyan2025ignore, wang2025beyond8020, cheng2025reasoningexploration, wang2025hierarchicalreasoning, gandhi2025cognitive, li2025llms, chu2025sftmemorizesrlgeneralizes}. Our focus is different: rather than asking which tokens an RL algorithm should emphasize, we ask how far plain cross-entropy training on a model's own raw outputs can go without rewards or verifiers, and why it reshapes the distribution in a way that decode-only tuning cannot match.

\paragraph{Decoding and truncation.}
At inference time, top-$k$ sampling, nucleus sampling, and truncation-as-desmoothing analyze how temperature and support restriction shape generation quality \citep{fan2018hierarchical, holtzman2020curious, hewitt2022truncation}. Our contribution is not a new decoding rule. Instead, we show that training on samples generated under shifted decoding can alter the model itself, making a simple fixed decoding policy substantially more effective at test time.

\paragraph{Self-improvement without external reward.}
Several methods improve language models using self-generated signal without human labels, but they still rely on internal critique, judging, filtering, or iterative self-evaluation \citep{wang2023selfinstruct, bai2022constitutionalai, huang2023selfimprove, yuan2024selfrewarding}. A closely related line, often framed as unsupervised RLVR or intrinsic-signal learning, replaces ground-truth rewards with internal signals such as majority vote, entropy, confidence, or self-certainty \citep{he2026far, zuo2025ttrl, agarwal2025entropy, prabhudesai2025rent, zhao2025learningreason, zhang2025nofreelunch}; related analyses also study entropy reduction as a driver of reasoning gains and entropy collapse as a limit on exploration during RL \citep{cui2025entropy}. \SD{} differs from this line in both method and mechanism. It is not an RL procedure that directly optimizes a scalar entropy objective or uniformly drives policy entropy downward. Instead, training on temperature-shifted, truncated self-samples reshapes the token distribution in a context-dependent way: it suppresses diffuse tail mass while preserving, and at fork-like contexts even increasing, useful entropy within the retained head. As a result, the model can become lower-entropy overall while more explorable where it matters. In this sense, \SD{} is better understood as support compression plus within-support reshaping, rather than direct Shannon-entropy minimization \citep{renyi1961measures}.

\section{Conclusion}
\label{sec:conclusion}

We have shown that a model can improve code generation by training on its own raw outputs alone. Across six models, \sd{} consistently improves LiveCodeBench, with the largest gains on harder problems; for Qwen3-30B-Instruct, pass@1 rises from 42.4\% to 55.3\% on LiveCodeBench\,v6. Our evidence points to a simple explanation: code generation mixes precision-bound locks and exploration-bound forks, and \SD{} reshapes token distributions so decoding can explore useful branches without reopening distractor tails. More broadly, these results suggest that strong code models contain latent capability that can be unlocked without a verifier, a teacher, or reinforcement learning.

\section*{Acknowledgments}

We thank David Grangier, Tatiana Likhomanenko, Zijin Gu, Samy Bengio, Vivek Rathod, Josh Susskind, Shuangfei Zhai, and Jiatao Gu for stimulating discussions and valuable suggestions during the preparation of this manuscript.

\clearpage

\appendix
\addtocontents{toc}{\protect\setcounter{tocdepth}{2}}

\etocsettocstyle{}{}
\etocsettocdepth{subsection}

\etocsetstyle{section}
  {}{}
  {\normalsize\noindent
   \makebox[1.5em][l]{\sffamily\bfseries\color{bscolor}\etocnumber}%
   \sffamily\etocname
   \nobreak\leaders\hbox to 0.7em{\hss.\hss}\hfill\nobreak
   \makebox[1.5em][r]{\sffamily\etocpage}\par
   \vskip 3pt}
  {}

\etocsetstyle{subsection}
  {}{}
  {\normalsize\noindent\hspace{1.5em}%
   \makebox[2.5em][l]{\sffamily\small\color{bscolor}\etocnumber}%
   {\sffamily\small\etocname}%
   \nobreak\leaders\hbox to 0.7em{\hss.\hss}\hfill\nobreak
   \makebox[1.5em][r]{\sffamily\small\etocpage}\par
   \vskip 1.5pt}
  {}

\begin{tcolorbox}[enhanced, breakable, frame hidden,
  colback=titlebg, arc=4pt,
  left=5mm, right=5mm, top=4mm, bottom=4mm,
  before skip=8pt plus 2pt, after skip=12pt plus 2pt]
{\Large\sffamily\bfseries Appendix Contents}
\vskip 6pt
{\color{bordercolor}\hrule height 0.4pt}
\vskip 8pt
\tableofcontents
\end{tcolorbox}

\section{Decoding Pipeline: From Notation to Implementation}
\label{app:decoding_pipeline}

All inference in this paper uses vLLM~v0.11.0~\citep{kwon2023vllm} (commit \texttt{b8b302c}).\footnote{\url{https://github.com/vllm-project/vllm/tree/b8b302cde434df8c9289a2b465406b47ebab1c2d}}
The decoding operator used in the main text (\Cref{sec:method}) maps directly to vLLM's \texttt{Sampler} class and its associated helpers. In this appendix, we unpack it into explicit temperature, top-$k$, and top-$p$ steps.
This section documents the exact order of operations to make the decoding semantics fully reproducible.
Other logit processors available in vLLM (repetition, frequency, and presence penalties;
\texttt{min\_p}; logit bias) are not used in our experiments; see the
\texttt{Sampler.forward} docstring in \texttt{v1/sample/sampler.py} for the complete
ordering when those processors are active.
Given raw logits $z_v$ from the language-model head, the pipeline applies four steps in the order described below; \Cref{fig:decode_pipeline} shows the full pipeline with annotated source excerpts.

\paragraph{Step~1: Temperature scaling.}
Temperature is applied first, before any truncation, by dividing every logit by~$T$ in place (\Cref{fig:decode_pipeline}, panel~1).
After a subsequent softmax, this is equivalent to raising each probability to the power~$1/T$ and renormalizing, i.e.\ the $\textsf{Temper}_T$ operator defined in \Cref{eq:theory_new:temper}.
Temperatures below $10^{-5}$ trigger greedy (argmax) decoding, bypassing all subsequent steps.

\paragraph{Step~2: Top-$k$ filtering.}
Logits are sorted in ascending order (\Cref{fig:decode_pipeline}, panel~2). The $k$-th largest value is found via
\texttt{gather}, and all logits strictly below it are set to~$-\infty$.
This operates on the already temperature-scaled logits, so the ranking reflects
$z_v/T$ rather than the raw logits.
When \texttt{top\_k\,=\,0} or \texttt{top\_k\,$\ge$\,|V|}, this step is skipped entirely.

\paragraph{Step~3: Top-$p$ (nucleus) filtering.}
On the same sorted tensor, a softmax is computed over the top-$k$ survivors (\Cref{fig:decode_pipeline}, panel~3).
A cumulative sum ascending from the smallest probability identifies the
lowest-mass tokens whose removal still leaves cumulative mass $\ge \text{top-}p$.
At least one token (the highest-probability one) is always retained.
The result is then scattered back to the original vocabulary order.

\paragraph{Step~4: Sampling via the Gumbel-max trick.}
Rather than calling \texttt{torch.multinomial} (which incurs synchronization
between CPU and GPU), vLLM draws independent $\mathrm{Exp}(1)$ noise, divides the
post-truncation probabilities by that noise, and takes the argmax (\Cref{fig:decode_pipeline}, panel~4).
This is mathematically equivalent to multinomial sampling from the surviving
support.

\paragraph{Correspondence to paper notation.}
The four steps above implement exactly the retained-support definition
used throughout the theory (\Cref{app:theory_new:setup}):
\[
S_s
\;=\;
\textsf{TopP}\!\Bigl(
\textsf{Temper}_{T}[p_0(\cdot \mid s)]
\Big|_{\textsf{TopK}(\textsf{Temper}_{T}[p_0(\cdot \mid s)],\,k)},\;
\text{top-}p
\Bigr).
\]
The implementation confirms that the order is \emph{temper\,$\to$\,top-$k$\,$\to$\,top-$p$\,$\to$\,sample}: temperature scaling is applied to logits first, top-$k$ filters on the tempered distribution, and top-$p$ operates within the top-$k$ retained set (renormalized via softmax over survivors before computing the cumulative threshold).

\begin{figure}[p]
\centering
\includegraphics[width=\textwidth]{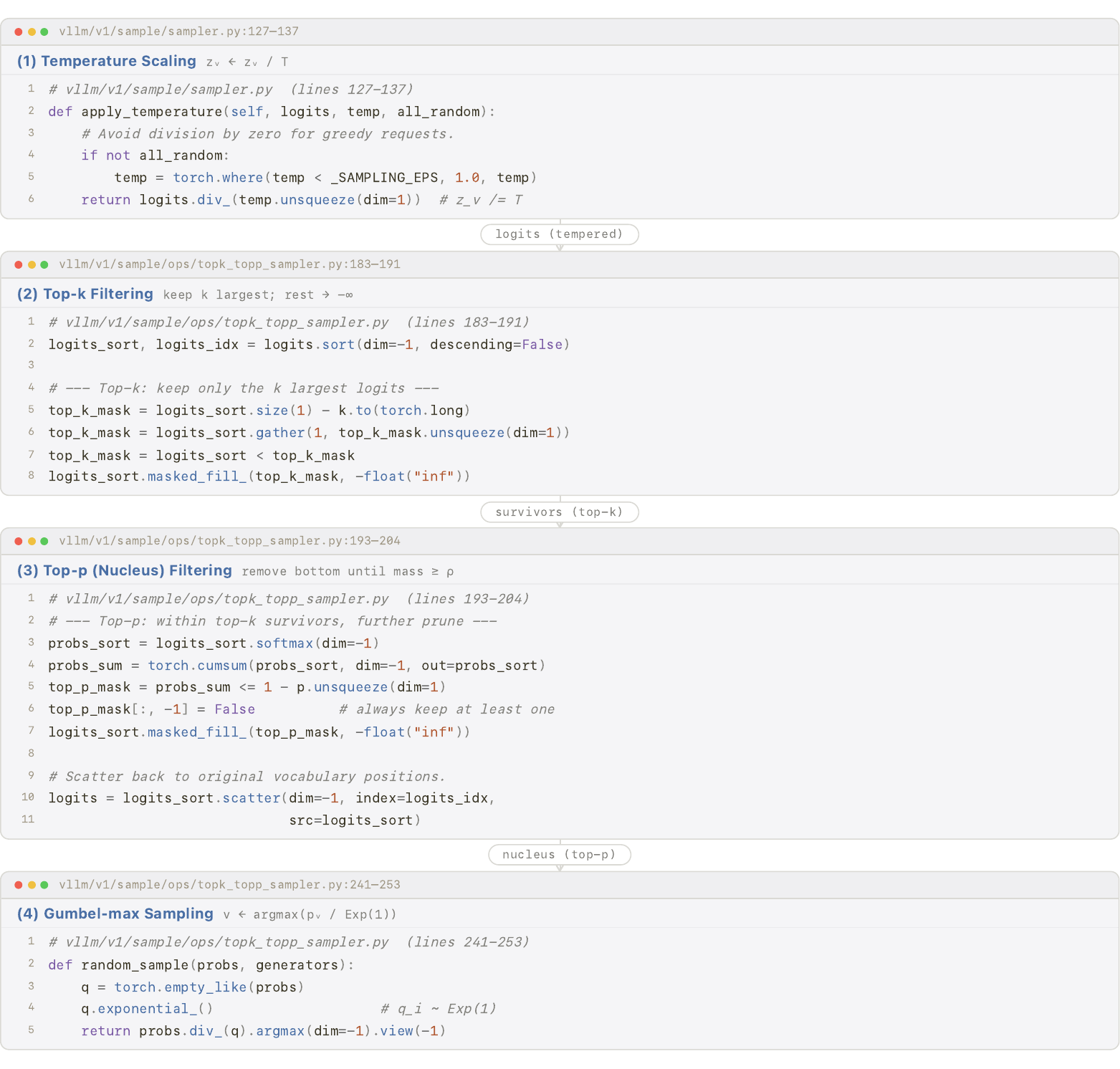}
\caption{\textbf{The vLLM v0.11.0 decoding pipeline used throughout this paper.}
The pipeline applies four steps in sequence: (1)~temperature scaling divides all logits by~$T$; (2)~top-$k$ filtering keeps the $k$ largest tempered logits and sets the rest to $-\infty$; (3)~top-$p$ filtering further prunes from the bottom of the surviving set until cumulative mass reaches the chosen top-$p$ threshold; and (4)~Gumbel-max sampling draws a token from the resulting distribution without synchronization between CPU and GPU.
Each panel shows the corresponding vLLM source excerpt with file path and line numbers.}
\label{fig:decode_pipeline}
\end{figure}

\section{A Theoretical View of \SD{}: Full Analysis}
\label{app:theory_new}
\label{sec:mechanism:objective}

This appendix formalizes the mechanism behind \SD{} in a minimal setting. We proceed in the same order as the mechanism story in the main text. We first define the local objects that \SD{} fits at a single decoding context. We then analyze the training objective induced by those objects and show why \SD{} has a nontrivial learning signal even though it trains on the model's own outputs. Next, we explain why the same global objective suppresses diffuse lock tails while preserving useful exploration at fork-like contexts. We then show why the resulting student can become lower-entropy overall while still preserving the conditional head entropy needed for exploration. Finally, we explain why decode-only tuning on the frozen model cannot reproduce the same effect.

\subsection{Notation and Setup}
\label{app:theory_new:setup}

We analyze \SD{} at the level of a single decoding context. At such a context, the frozen model's training-time decoding policy first selects a retained support and then induces a target distribution on that support; ordinary cross-entropy is then used to fit that target. We therefore begin by defining these objects in the order they are used.

Throughout this appendix, we use the word \emph{teacher} only as shorthand for the frozen pre-\SD{} model that generates the \SD{} training data. There is no separate or stronger external teacher model. The distinction is purely temporal: the teacher is the frozen model before \SD{} training, while the student is the model being optimized to fit the teacher-induced target.

A \emph{context} is a pair $s=(x,y_{<t})$ consisting of a prompt $x$ and the tokens generated so far. At each context $s$, a model with parameters $\theta$ defines a next-token distribution $p_\theta(\cdot \mid s)\in\Delta^{V-1}$ over the vocabulary $\mathcal{V}$. We write $p_0(\cdot \mid s)$ for the frozen pre-\SD{} model used to synthesize the training data, and $p_\theta(\cdot \mid s)$ for the student distribution learned by supervised fine-tuning. At initialization, the student is the original model, so $p_\theta=p_0$.

Training-time decoding follows the same basic pipeline used in the main paper: first apply temperature, then truncate, then renormalize. For temperature $T>0$ and a nonempty set $S \subseteq \mathcal{V}$, define the tempered distribution on $S$ by
\begin{equation}
\label{eq:theory_new:temper}
\textsf{Temper}_T^{S}[p](v)
\;=\;
\frac{p(v)^{1/T}\,\mathbf{1}\{v \in S\}}
     {\sum_{u \in S} p(u)^{1/T}}.
\end{equation}
When $S=\mathcal{V}$, we simply write $\textsf{Temper}_T[p]$. Low temperature sharpens a distribution, while high temperature flattens it.

Given a distribution $\pi$ on $\mathcal{V}$, $\textsf{TopK}(\pi,k)$ returns the set of the $k$ tokens with largest probability under $\pi$, with ties broken by a fixed deterministic rule. For any nonempty set $K \subseteq \mathcal{V}$ with $\pi(K)>0$, write
\[
\pi(v \mid K)
\;=\;
\frac{\pi(v)\,\mathbf{1}\{v \in K\}}{\sum_{u \in K}\pi(u)}.
\]
Given such a set $K$ and a top-$p$ threshold $\text{top-}p \in (0,1]$, $\textsf{TopP}(\pi|_K,\text{top-}p)$ returns the smallest prefix of the ranking on $K$, sorted by decreasing $\pi(\cdot \mid K)$-mass, whose cumulative mass under $\pi(\cdot \mid K)$ is at least $\text{top-}p$. Throughout the paper, both training-time and evaluation-time decoding first apply temperature and then truncate and renormalize.

Fix training-time decoding parameters $(T_{\textsf{train}}, k_{\textsf{train}}, \text{top-}p_{\textsf{train}})$. At context $s$, let $S_s$ be the teacher's retained support after applying training-time temperature, top-$k$, and top-$p$:
\begin{equation}
\label{eq:theory_new:retained_set}
S_s
\;\equiv\;
\textsf{TopP}\!\Bigl(
\textsf{Temper}_{T_{\textsf{train}}}[p_0(\cdot \mid s)]
\Big|_{\textsf{TopK}(\textsf{Temper}_{T_{\textsf{train}}}[p_0(\cdot \mid s)],\,k_{\textsf{train}})},
\text{top-}p_{\textsf{train}}
\Bigr).
\end{equation}
In words, $S_s$ is the teacher's retained support at context $s$: the set of tokens that survive training-time temperature scaling and truncation.

The target fitted by \SD{} at context $s$ is the truncated and renormalized tempered teacher distribution
\begin{equation}
\label{eq:theory_new:projected_target}
q_s(v)
\;=\;
\frac{p_0(v \mid s)^{1/T_{\textsf{train}}}\,\mathbf{1}\{v \in S_s\}}
     {\sum_{u \in S_s} p_0(u \mid s)^{1/T_{\textsf{train}}}}.
\end{equation}
By construction, $q_s$ is supported on $S_s$. Whenever $S_s \neq \mathcal{V}$ --- for example, if $k_{\textsf{train}}<|\mathcal{V}|$ or $\text{top-}p_{\textsf{train}}<1$ --- it lies on a proper face of the full simplex because it assigns zero probability outside the retained support.

At context $s$, \SD{} still uses ordinary cross-entropy:
\begin{equation}
\label{eq:theory_new:ce}
\mathcal{L}_s(\theta)
\;=\;
\mathrm{CE}\!\bigl(q_s,\,p_\theta(\cdot \mid s)\bigr)
\;=\;
\mathbb{E}_{v \sim q_s}\bigl[-\log p_\theta(v \mid s)\bigr].
\end{equation}
The important difference from naive self-training is therefore not the form of the loss. It is the fact that temperature and truncation alter the target before optimization begins. The rest of the theory is an analysis of what this target shift does.

\subsection{Understanding the \SD{} Objective and Its Learning Signal}
\label{app:theory_new:signal}
\label{app:theory_new:distribution} %

The first theoretical question is why \SD{} produces any learning signal at all. If one literally samples from a model and then trains the same model on those samples without changing the target, self-training is an on-policy fixed point. \SD{} avoids this fixed-point failure because temperature and truncation modify the teacher-induced target before the student sees it. We isolate these two sources of signal separately and then combine them.

\paragraph{Naive self-training is a fixed point.}
Consider first the degenerate case where the teacher samples from its own base distribution with unit temperature and no truncation. Then the training target is just the model itself, so the expected score-function gradient at initialization vanishes:
\begin{equation}
\label{eq:theory_new:score_function}
\mathbb{E}_{v \sim p_\theta(\cdot \mid s)}
\bigl[\nabla_\theta \log p_\theta(v \mid s)\bigr]
\;=\;
\nabla_\theta \sum_v p_\theta(v \mid s)
\;=\;
\nabla_\theta 1
\;=\;
0.
\end{equation}
The gradient of the log-probability, averaged under the model's own distribution, telescopes because probabilities sum to one. In practical terms, if one samples from the model's own distribution at $T{=}1$ and trains on those samples, the expected update direction is the zero vector. Naive self-training therefore produces no signal. Any useful signal in \SD{} must come from the way temperature and truncation modify the target before optimization begins.

\paragraph{Truncation introduces a support gate.}
To isolate the effect of truncation, first factor the loss through the retained support. Define the student's mass on the teacher's retained support by
\begin{equation}
\label{eq:theory_new:keptmass}
\textsf{KeptMass}_\theta(s)
\;\equiv\;
\sum_{v \in S_s} p_\theta(v \mid s),
\end{equation}
and the corresponding conditional distribution by
\begin{equation}
\label{eq:theory_new:restricted}
p_\theta(v \mid s, S_s)
\;=\;
\frac{p_\theta(v \mid s)\,\mathbf{1}\{v \in S_s\}}
     {\textsf{KeptMass}_\theta(s)}.
\end{equation}
Since $q_s$ is supported on $S_s$, the per-context loss can be written exactly as
\begin{equation}
\label{eq:theory_new:gate_cond}
\mathcal{L}_s(\theta)
\;=\;
-\log \textsf{KeptMass}_\theta(s)
\;+\;
\mathrm{CE}\!\bigl(q_s,\,p_\theta(\cdot \mid s, S_s)\bigr).
\end{equation}
This identity reveals that truncation splits the learning problem into two levels: a \emph{gate-level} objective that maximizes mass inside the retained support, and a \emph{conditional-level} objective that matches the teacher's within-support distribution. The gate cares only about the in/out partition; the conditional term cares only about relative probabilities inside the retained support. When $S_s=\mathcal{V}$, the gate term disappears and the factorization collapses to ordinary cross-entropy against the tempered teacher.

The same factorization also explains why tail suppression is persistent throughout training. When $S_s \neq \mathcal{V}$, the target $q_s$ lies on a proper face of the probability simplex $\Delta^{V-1}$, assigning exactly zero probability to every token outside $S_s$. Viewing the student at this context as an unconstrained softmax over local logits $z \in \mathbb{R}^{|\mathcal{V}|}$, we therefore have
\begin{equation}
\label{eq:theory_new:infimum}
\inf_{z \in \mathbb{R}^{|\mathcal{V}|}} \mathcal{L}_s(z)
\;=\;
H(q_s),
\end{equation}
but this infimum is not attained at any finite logit vector; it is approached only as outside-support logits tend to $-\infty$. Geometrically, the truncated target places the optimum on a simplex face, and the gate term drives the student toward that face. Training therefore never fully satisfies the gate penalty, maintaining persistent pressure to suppress tail logits throughout optimization.

\paragraph{Temperature reshapes the full support.}
To isolate the effect of temperature, now remove truncation and study the full-support target. If $S_s=\mathcal{V}$, the teacher target is $\textsf{Temper}_T[p_0(\cdot \mid s)]$, and the loss can be written as a R\'enyi-shaping term plus a KL anchor:
\begin{align}
\mathcal{L}_s(\theta)
&=
\mathbb{E}_{v \sim \textsf{Temper}_T[p_0(\cdot \mid s)]}
\bigl[-\log p_\theta(v \mid s)\bigr]
\nonumber\\
&=
(1-T)\,H_{1/T}\!\bigl(p_\theta(\cdot \mid s)\bigr)
+
T \cdot
\mathrm{KL}\!\Bigl(
\textsf{Temper}_T[p_0(\cdot \mid s)]
\;\|\;
\textsf{Temper}_T[p_\theta(\cdot \mid s)]
\Bigr)
+
T \cdot H\!\bigl(\textsf{Temper}_T[p_0(\cdot \mid s)]\bigr),
\end{align}
where
\[
H_\alpha(\pi)
\;=\;
\frac{1}{1-\alpha}\log\sum_v \pi(v)^\alpha
\]
is the R\'enyi entropy of order $\alpha \neq 1$. The R\'enyi entropy $H_{1/T}$ at order $\alpha=1/T$ interpolates between familiar extremes: as $T \to \infty$ ($\alpha \to 0$), it approaches $\log|\mathrm{supp}(\pi)|$, the maximum entropy achievable on the support; as $T \to 0$ ($\alpha \to \infty$), it approaches $-\log \max_v \pi(v)$, the min-entropy. Shannon entropy corresponds to the intermediate case $\alpha=1$ ($T=1$). For the typical \SD{} setting $T>1$, the order $1/T<1$ falls in the sub-Shannon regime, which is more sensitive to diffuse tails and less tolerant of concentrated peaks than Shannon entropy is. Throughout, occurrences of $(1-T)H_{1/T}$ are understood via the equivalent free-energy form $-T \log \sum_v \pi(v)^{1/T}$, with continuous extension at $T=1$, where the term equals zero.

The coefficient $(1{-}T)$ determines the direction of the resulting pressure. For $T>1$, $(1{-}T)<0$, so minimizing the loss \emph{maximizes} $H_{1/T}$ and therefore smooths the distribution. For $T<1$, the effect reverses and the loss sharpens the distribution. At $T=1$, the R\'enyi-shaping term vanishes identically and the fixed-point symmetry of \Cref{eq:theory_new:score_function} returns. The KL term keeps the student aligned with the teacher's tempered preferences.

This two-term structure already shows why even the pathological no-truncation, high-temperature regime of \Cref{sec:high_temp} retains a nontrivial learning signal: the R\'enyi-shaping term is nonzero whenever $T \neq 1$. But because the reshaping acts on the \emph{full vocabulary}, it lifts harmful tail tokens just as readily as it diversifies genuinely ambiguous contexts. Temperature alone creates exploration but does not know where that exploration should stop. Truncation supplies that boundary.

\paragraph{Full \SD{} combines both effects.}
Applying the same temperature decomposition inside the retained support and then reinserting the gate term yields the central three-term decomposition:
\begin{align}
\mathcal{L}_s(\theta)
&=
-\log \textsf{KeptMass}_\theta(s)
+
(1-T)\,H_{1/T}\!\bigl(p_\theta(\cdot \mid s, S_s)\bigr)
\nonumber\\
&\qquad
+
T \cdot
\mathrm{KL}\!\Bigl(
q_s
\;\|\;
\textsf{Temper}_T[p_\theta(\cdot \mid s, S_s)]
\Bigr)
+
T \cdot H(q_s).
\label{eq:three_term}
\end{align}

\emph{Proof sketch.}
Start from the gate-conditional factorization \Cref{eq:theory_new:gate_cond}. For the conditional cross-entropy term, write
\[
-\log p_\theta(v \mid s, S_s)
\;=\;
-T \log \textsf{Temper}_T^{S_s}[p_\theta(\cdot \mid s, S_s)](v)
\;-\;
T \log Z^{S_s}_{\theta,T},
\]
where $\textsf{Temper}_T^{S_s}[p_\theta(\cdot \mid s, S_s)]$ is the student's distribution restricted to $S_s$ and then tempered, and
$Z^{S_s}_{\theta,T} = \sum_{u \in S_s} p_\theta(u \mid s, S_s)^{1/T}$ is the corresponding within-support partition function. Taking the expectation under $q_s$ separates the first piece into
\[
T \cdot \mathrm{CE}\!\bigl(q_s,\,\textsf{Temper}_T^{S_s}[p_\theta(\cdot \mid s, S_s)]\bigr)
=
T \cdot H(q_s)
+
T \cdot \mathrm{KL}\!\bigl(q_s \,\|\, \textsf{Temper}_T^{S_s}[p_\theta(\cdot \mid s, S_s)]\bigr),
\]
yielding the KL anchor term and the constant $T \cdot H(q_s)$. The partition-function term becomes the R\'enyi-shaping term via the free-energy identity
\begin{equation}
\label{eq:theory_new:renyi_FE}
-T \log \sum_{u \in S} \pi(u)^{1/T}
\;=\;
(1-T)\,H_{1/T}(\pi),
\end{equation}
which holds for any distribution $\pi$ on a set $S$.

This decomposition is the objective-level core of the mechanism story in the paper. The first term compresses support, the second reshapes the retained head, and the third keeps that reshaping aligned with the teacher's relative preferences. The final term $T \cdot H(q_s)$ is constant in $\theta$ and does not contribute to optimization.

\paragraph{Immediate interpretation.}
The three-term decomposition makes clear what \SD{} is and is not doing. It is not learning from correctness labels, reward signals, or verification outcomes. Instead, it is fitting a target that has already been altered by the frozen model's own decoding rule. Truncation decides which part of the distribution is worth keeping; temperature decides how the retained mass is redistributed within that support; and the KL anchor prevents the student from drifting arbitrarily far from the frozen model's induced target.

\paragraph{Population limit.}
The same decomposition also clarifies what training is trying to approach at a fixed context. At the level of an unconstrained local softmax over logits, as the loss approaches its infimum, the student drives all of its mass onto the retained support and matches the truncated tempered teacher within that support. For truncated targets, this limit is reached only as outside-support logits tend to $-\infty$; at any finite logit vector, some residual outside-support mass remains. The student therefore does not converge toward the raw teacher distribution $p_0(\cdot \mid s)$; rather, it approaches the teacher \emph{after} training-time temperature and truncation have already reshaped that distribution. This is the first precise sense in which \SD{} can improve over the base model: the target is not the original distribution itself, but a structured transformation of it.

\paragraph{Logit-level gradient.}
The same mechanism becomes especially transparent at the logit level. At context $s$, the loss is $\mathrm{CE}(q_s, p_\theta(\cdot \mid s))$, so the standard softmax identity gives
\begin{equation}
\label{eq:theory_new:grad_split}
\frac{\partial \mathcal{L}_s}{\partial z_\theta(v \mid s)}
\;=\;
\begin{cases}
-(1-\textsf{KeptMass}_\theta(s))\,p_\theta(v \mid s, S_s)
\;+\;
\bigl(p_\theta(v \mid s, S_s)-q_s(v)\bigr),
& v \in S_s, \\[4pt]
p_\theta(v \mid s),
& v \notin S_s.
\end{cases}
\end{equation}
For tokens inside $S_s$, the gradient splits into two additive components: a support-transfer term $-(1-\textsf{KeptMass}_\theta)\,p_\theta(v \mid s, S_s)$ that pulls mass from outside into the retained support, and a within-support fitting term $p_\theta(v \mid s, S_s) - q_s(v)$ that reshapes the head toward the teacher target. For tokens outside $S_s$, the target is zero, so the gradient reduces to $+p_\theta(v \mid s)$: strictly positive, meaning gradient descent pushes those logits downward directly. Tail suppression is therefore not an indirect side effect of fitting the head; it is written into the update rule itself as an explicit downward force on every outside-support token.

This gradient structure also clarifies the relationship between \SD{} and neighboring paradigms. Policy-gradient reinforcement learning breaks the self-training fixed point by weighting the score function with an external return. \SD{} breaks it by altering the target distribution itself, so optimization remains standard supervised learning with positive, normalized weights. Standard knowledge distillation, by contrast, matches a full-vocabulary teacher and therefore lacks the support-compression term entirely; without the gate term, there is no mechanism to drive aggressive tail suppression.

\paragraph{Relation to Shannon entropy minimization.}
This objective is also distinct from direct Shannon entropy minimization. As the preceding temperature-only decomposition already shows, \SD{} induces a R\'enyi-shaping term of order $1/T$ rather than a Shannon entropy term, and the shaping order varies continuously with the training temperature. Operationally, \SD{} still remains ordinary supervised fine-tuning on teacher-induced targets with positive, normalized weights, rather than an objective that directly optimizes Shannon entropy using signed policy-gradient-like weights.

\paragraph{Summary.}
The objective-level picture is now complete. Naive self-training fails because it is a fixed point. Truncation creates a support gate that pushes probability mass onto the retained support. Non-unit temperature reshapes the target inside that support. Full \SD{} combines these two forces, and the resulting gradients make explicit why the student can learn to suppress tail mass even without any correctness labels. The next subsection explains why this same global objective suppresses diffuse lock tails while preserving useful exploration at fork-like contexts.

\subsection{How \SD{} Reshapes Locks and Forks}
\label{app:theory_new:locks_forks}
\label{app:theory_new:eval} %
\label{app:temp_composition} %

The same global \SD{} objective does not act the same way at every context because the local retained support is not the same everywhere. Lock-like contexts are those where useful mass is concentrated on one or a few top tokens; fork-like contexts are those where several plausible continuations survive. This difference in local support geometry is enough to explain both the training-time asymmetry and the evaluation-time asymmetry emphasized in the main text.

\paragraph{Truncation makes the objective context-adaptive.}
Without truncation, the R\'enyi-shaping term in \Cref{eq:three_term} acts on the full vocabulary: for $T>1$ it smooths the distribution globally, lifting useful fork alternatives and harmful lock distractors alike. Truncation changes this by restricting the reshaping to the retained support $S_s$. The set $S_s$ is selected by the teacher's local distributional shape. A peaked distribution reaches the top-$p$ threshold in very few tokens, while a flatter distribution retains many. The same global training rule therefore produces different local behavior depending on the size and geometry of $S_s$.

\paragraph{At locks, support compression dominates.}
When $S_s$ is small (a single dominant token plus perhaps one runner-up), the R\'enyi-shaping term
$(1{-}T)\,H_{1/T}(p_\theta(\cdot \mid s, S_s))$ has limited room to act, because $H_{1/T}(p_\theta(\cdot \mid s, S_s)) \le \log |S_s|$ and the head entropy of a near-singleton distribution is already close to zero. The learning signal at a lock therefore comes primarily from the support-compression term $-\log \textsf{KeptMass}_\theta(s)$, which pushes probability onto the retained support and directly suppresses the distractor tail. At the logit level, every token outside $S_s$ receives a gradient of $+p_\theta(v \mid s)$ by \Cref{eq:theory_new:grad_split}, driving its logit downward in proportion to its current mass. Because the truncated target lies on a proper face of the simplex, this downward pressure never fully disappears at any finite logit vector. The net effect is that lock-like contexts lose diffuse tail mass and typically become easier to secure under evaluation-time decoding; in the extreme case $|S_s|=1$, this reduces to pure support compression.

\paragraph{At forks, within-support reshaping has room to act.}
When $S_s$ is larger (several plausible continuations survive truncation), the support-compression term is still active, but the retained head already contains most of the useful mass. In this regime, the R\'enyi-shaping term has room to matter. For $T>1$, the coefficient $(1{-}T)<0$ means that minimizing the loss maximizes $H_{1/T}(p_\theta(\cdot \mid s, S_s))$, smoothing the distribution among the surviving alternatives. Crucially, this smoothing is confined to the retained support and cannot reopen the discarded tail. The KL anchor keeps this reshaping aligned with the teacher's within-support preferences, so the head is flattened only within the set of tokens that the frozen model has already judged worth keeping. This is the formal reason the same objective can preserve useful diversity at fork-like contexts while still cleaning up the tail elsewhere.

\paragraph{Temperature sensitivity within fixed support.}
The preceding discussion explains the training-time asymmetry. We now ask how evaluation-time temperature interacts with these reshaped distributions. Let $\tau=T_{\textsf{eval}}$. For algebraic convenience, write $\gamma=1/\tau$ and study temperature inside a fixed retained support.

The relevant local object is the teacher restricted to $S_s$ and retempered by the power $\gamma$:
\begin{equation}
\label{eq:theory_new:barp}
\pi_{s,\gamma}(v)
\;=\;
\frac{\mathbf{1}\{v \in S_s\}\,p_0(v \mid s)^\gamma}
     {\sum_{u \in S_s} p_0(u \mid s)^\gamma}.
\end{equation}
Differentiating $\pi_{s,\gamma}$ shows that every temperature-sensitivity question reduces to a covariance identity:
\begin{equation}
\label{eq:theory_new:escort_cov}
\frac{d}{d\gamma}\,
\mathbb{E}_{v \sim \pi_{s,\gamma}}[f(v)]
\;=\;
\mathrm{Cov}_{\pi_{s,\gamma}}\!\bigl(f(v),\,\log p_0(v \mid s)\bigr).
\end{equation}
We now apply this identity in several ways.

\paragraph{Useful sets and direction of reshaping.}
Let $A \subseteq S_s$ be a nonempty set of locally useful actions such that $\pi_{s,\gamma}(A)>0$. At a lock, $A$ may be a single correct continuation; at a fork, it may be a set of viable branches. Applying \Cref{eq:theory_new:escort_cov} with
$f(v)=\mathbf{1}\{v \in A\}$ gives the absolute sensitivity
\begin{equation}
\label{eq:theory_new:good_action}
\frac{d}{d\gamma}\pi_{s,\gamma}(A)
\;=\;
\mathrm{Cov}_{\pi_{s,\gamma}}\!\bigl(
\mathbf{1}\{v \in A\},\,\log p_0(v \mid s)
\bigr).
\end{equation}
Dividing both sides by $\pi_{s,\gamma}(A)$ and expanding the covariance gives the proportional sensitivity
\begin{equation}
\label{eq:theory_new:direction_reshaping}
\frac{\partial}{\partial \gamma}
\log \pi_{s,\gamma}(A)
\;=\;
\mathbb{E}_{\pi_{s,\gamma}(\cdot \mid A)}[\log p_0(v \mid s)]
\;-\;
\mathbb{E}_{\pi_{s,\gamma}}[\log p_0(v \mid s)].
\end{equation}
The right-hand side compares the average log-probability of tokens in $A$ to the within-support average. When these two averages differ, tempering redistributes mass between $A$ and its complement.

This criterion captures the canonical lock and fork regimes. At a lock, the correct token typically has log-probability above the within-support average, so increasing $\gamma$ (lowering temperature) concentrates mass further on that token. At a fork, viable but lower-ranked branches can lie below the within-support average, so decreasing $\gamma$ (raising temperature) redistributes mass toward them. Because the tail has already been removed by truncation, this redistribution acts within a cleaned-up head rather than reviving the discarded tail.

\paragraph{Entropy sensitivity.}
The same covariance identity also controls how entropy responds to temperature inside a fixed retained support, and this is the result that we will use directly in the next subsection. Differentiating the entropy of $\pi_{s,\gamma}$ gives
\begin{equation}
\label{eq:theory_new:entropy_response}
\frac{d}{d\gamma}H(\pi_{s,\gamma})
\;=\;
-\gamma\,\mathrm{Var}_{\pi_{s,\gamma}}\!\bigl(\log p_0(v \mid s)\bigr)
\;\le\;
0.
\end{equation}
Equivalently, since $\gamma=1/T$ and $d\gamma/dT=-1/T^2<0$, the chain rule gives
$dH/dT = \mathrm{Var}/T^3 \ge 0$: entropy is nondecreasing in temperature. The variance term therefore summarizes local temperature sensitivity: it vanishes for singleton supports and is also small for nearly uniform heads, while it becomes large when several viable alternatives remain with non-identical probabilities. After \SD{}, lock-like contexts typically retain either a tiny support or an almost degenerate head, whereas fork-like contexts can retain a nontrivial, uneven multi-token head. Evaluation-time temperature is therefore typically more effective at forks and less effective at locks, which is the asymmetry needed to alleviate the precision-exploration conflict.

\paragraph{Evaluation-time behavior under a local ideal-fit approximation.}
The preceding analysis characterizes how the objective behaves locally. To connect that picture to evaluation time, we now use a local ideal-fit approximation and then state the two main consequences that follow from it.

At a teacher-visited context $s$, suppose the student has fit the training target:
\begin{equation}
\label{eq:theory_new:ideal_fit}
p_\theta(\cdot \mid s)
\;=\;
q_s.
\end{equation}
This should be read as a local approximation rather than as a claim that every trained student exactly satisfies it at every context. We also write
\[
p_{0,\tau}(\cdot \mid s)
\;\equiv\;
\textsf{Temper}_{\tau}[p_0(\cdot \mid s)]
\]
for the frozen model after evaluation-time temperature and before any new support restriction. Under this approximation, the formulas below cleanly separate the contributions of training-time temperature and training-time truncation.

\paragraph{Temperature composition inside fixed support.}
Inside a fixed retained support, training-time and evaluation-time temperatures compose multiplicatively.

\begin{lemma}[Temperatures compose multiplicatively]
\label{lem:temp_compose}
For any distribution $p$ over $\mathcal{V}$ and temperatures $T_1, T_2 > 0$,
\begin{equation}
\label{eq:temp_compose}
\textsf{Temper}_{T_2}\!\left[\textsf{Temper}_{T_1}[p]\right]
=
\textsf{Temper}_{T_1 \cdot T_2}[p].
\end{equation}
The same holds when $p$ is replaced by any restriction to a fixed support set
$S \subseteq \mathcal{V}$.
\end{lemma}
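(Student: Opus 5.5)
The proof is a direct computation from the definition of $\textsf{Temper}_T^{S}$ in \Cref{eq:theory_new:temper}. I will work with an arbitrary nonempty support $S \subseteq \mathcal{V}$; the full-vocabulary case is then $S=\mathcal{V}$. To keep everything well defined, I assume $p(S)>0$. Tokens with $p(v)=0$ stay at zero under every tempering, so they cause no trouble once we adopt the convention $0^{1/T}=0$.

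The first step is to compute the inner tempering. By definition,
\[
r(v) \equiv \textsf{Temper}_{T_1}^{S}[p](v) = \frac{p(v)^{1/T_1}\,\mathbf{1}\{v\in S\}}{Z_1}, \qquad Z_1=\sum_{u\in S}p(u)^{1/T_1}.
\]
Note that $r$ is supported on $S$, and $Z_1>0$.

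The second step applies $\textsf{Temper}_{T_2}^{S}$ to $r$. The numerator becomes
\[
r(v)^{1/T_2} = \frac{p(v)^{1/(T_1T_2)}\,\mathbf{1}\{v\in S\}}{Z_1^{1/T_2}},
\]
using $(a^{1/T_1})^{1/T_2}=a^{1/(T_1T_2)}$ for $a\ge 0$ and $\mathbf{1}^{1/T_2}=\mathbf{1}$. The denominator is the sum of the same expression over $u\in S$, namely $\sum_{u\in S}p(u)^{1/(T_1T_2)}/Z_1^{1/T_2}$. The positive constant $Z_1^{1/T_2}$ appears in both numerator and denominator and cancels. What remains is
\[
\frac{p(v)^{1/(T_1T_2)}\mathbf{1}\{v\in S\}}{\sum_{u\in S}p(u)^{1/(T_1T_2)}} = \textsf{Temper}_{T_1T_2}^{S}[p](v),
\]
which is the claim.

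The third step covers the ``restriction'' phrasing of the statement, where $p$ is first replaced by $p(\cdot\mid S)$. Tempering is invariant under positive rescaling of its input, because any scalar factor cancels exactly as $Z_1$ did in the second step. Hence $\textsf{Temper}_T^{S}[p(\cdot\mid S)]=\textsf{Temper}_T^{S}[p]$, and the identity carries over unchanged.

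An equivalent logit-level view is that tempering divides the logits by $T$ modulo an additive constant. Dividing by $T_1$ and then by $T_2$ is the same as dividing by $T_1T_2$, and additive constants are absorbed by the softmax.

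None of these steps is a genuine obstacle. The only point needing care is bookkeeping: checking that the normalizers are positive (so the cancellation is legitimate) and treating zero-probability tokens with the convention $0^{1/T}=0$. I would state both explicitly at the start of the proof.
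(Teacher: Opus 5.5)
Your proposal is correct and follows the same route as the paper: the paper's proof is the one-line observation that $\bigl(p(v)^{1/T_1}\bigr)^{1/T_2} = p(v)^{1/(T_1 T_2)}$ and that the normalizing constants cancel. Your version spells out the same computation, and adds the positivity of the normalizers and the scale invariance that handles the restricted-support case.
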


\begin{proof}
$\textsf{Temper}_{T_2}[\textsf{Temper}_{T_1}[p]](v) \propto
\left(p(v)^{1/T_1}\right)^{1/T_2} = p(v)^{1/(T_1 T_2)}$, and renormalization constants cancel.
\end{proof}

\begin{proposition}[Evaluation-time form under local ideal fit]
Assume \Cref{eq:theory_new:ideal_fit} at context $s$, and let
$\tau=T_{\textsf{eval}}$ denote the evaluation-time temperature. Applying temperature to the student gives
\begin{equation}
\label{eq:theory_new:q_tau}
q_{s,\tau}(v)
\;=\;
\frac{q_s(v)^{1/\tau}}{\sum_u q_s(u)^{1/\tau}}
\;=\;
\frac{\mathbf{1}\{v \in S_s\}\,p_0(v \mid s)^{1/(T_{\textsf{train}}\tau)}}
     {\sum_{u \in S_s} p_0(u \mid s)^{1/(T_{\textsf{train}}\tau)}}.
\end{equation}
Inside a fixed retained support, the student therefore behaves like the teacher evaluated at the product temperature
$T_{\textsf{eff}} = T_{\textsf{train}}T_{\textsf{eval}}$. Under the local ideal-fit approximation, this is the cleanest local formal version of the effective-temperature picture in the experiments.
\end{proposition}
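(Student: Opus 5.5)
The plan is to substitute the ideal-fit assumption \Cref{eq:theory_new:ideal_fit} into the evaluation-time tempering operator and then apply \Cref{lem:temp_compose} on the fixed support $S_s$. By \Cref{eq:theory_new:ideal_fit}, the student's next-token distribution at $s$ is exactly $q_s$. Evaluation-time temperature $\tau$ therefore produces $\textsf{Temper}_\tau[q_s]$, which by \Cref{eq:theory_new:temper} with $S=\mathcal{V}$ is $q_s(v)^{1/\tau}/\sum_u q_s(u)^{1/\tau}$. This is the first equality in \Cref{eq:theory_new:q_tau}.

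For the second equality, I would recognize from \Cref{eq:theory_new:projected_target} that
\[
q_s = \textsf{Temper}^{S_s}_{T_{\textsf{train}}}[p_0(\cdot\mid s)].
\]
One small point needs care before invoking the lemma. Tempering over all of $\mathcal{V}$ and tempering over $S_s$ agree here, because $q_s(u)=0$ for $u\notin S_s$. Positive powers preserve zeros, so $q_s(u)^{1/\tau}=0$ off $S_s$ and those tokens drop out of the normalizer. Hence $\textsf{Temper}_\tau[q_s]=\textsf{Temper}_\tau^{S_s}[q_s]$.

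Next I would apply \Cref{lem:temp_compose}, in its version restricted to the fixed support $S=S_s$, with $T_1=T_{\textsf{train}}$ and $T_2=\tau$. This gives $\textsf{Temper}^{S_s}_{T_{\textsf{train}}\tau}[p_0(\cdot\mid s)]$. Writing this out with the indicator $\mathbf{1}\{v\in S_s\}$ yields the right-hand side of \Cref{eq:theory_new:q_tau}. The interpretive sentence follows immediately, since the result is exactly the teacher tempered at $T_{\textsf{eff}}=T_{\textsf{train}}T_{\textsf{eval}}$ and restricted to $S_s$.

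There is no real obstacle. The only step needing care is the support bookkeeping: the normalizing constant $\sum_{u\in S_s}p_0(u\mid s)^{1/T_{\textsf{train}}}$ must cancel. It appears raised to the power $1/\tau$ in both the numerator and the denominator of $\textsf{Temper}_\tau[q_s]$, so it cancels; this is the same cancellation as in the proof of \Cref{lem:temp_compose}. I would also note explicitly that the statement concerns temperature only. Any additional evaluation-time truncation $\rho_{\textsf{eval}}$ would act afterwards on this distribution and is not covered by the identity.
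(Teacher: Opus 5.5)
Your proposal is correct and follows the paper's own (implicit) argument: substitute the ideal fit $p_\theta(\cdot\mid s)=q_s$ into the tempering operator and apply \Cref{lem:temp_compose} on the fixed support $S_s$, so that the normalizer $\sum_{u\in S_s}p_0(u\mid s)^{1/T_{\textsf{train}}}$ cancels. Your extra bookkeeping, that zeros off $S_s$ stay zero under positive powers, is accurate, as is your remark that evaluation-time truncation lies outside the identity.
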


\begin{proposition}[Local gain decomposition under local ideal fit]
Under the same local approximation, the student's evaluation-time gain separates into a support-compression factor and a within-support reshaping factor. Reuse the restricted retempered teacher distribution $\pi_{s,\gamma}$ from \Cref{eq:theory_new:barp}. We also need one scalar quantity: the teacher's evaluation-time mass that remains inside the training-time retained support,
\begin{equation}
\label{eq:theory_new:m_tau}
m_s(\tau)
\;\equiv\;
p_{0,\tau}(S_s)
\;=\;
\sum_{v \in S_s} p_{0,\tau}(v \mid s).
\end{equation}
Then for any event $A \subseteq S_s$ with $\pi_{s,\,1/\tau}(A)>0$,
\begin{equation}
\label{eq:theory_new:local_gain}
q_{s,\tau}(A)
\;=\;
\frac{1}{m_s(\tau)}
\cdot
\frac{\pi_{s,\,1/(T_{\textsf{train}}\tau)}(A)}
     {\pi_{s,\,1/\tau}(A)}
\cdot
p_{0,\tau}(A \mid s).
\end{equation}
This identity separates two improvement channels. The factor $1/m_s(\tau)$ is a support-compression gain: mass that the teacher would have leaked outside the retained support is recovered. The ratio of the two within-support conditionals is a reshaping gain: the student redistributes probability among the retained tokens themselves. The decomposition is algebraically exact under the ideal-fit assumption and is useful because each factor has a clean limiting interpretation.
\end{proposition}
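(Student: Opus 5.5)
The identity in \Cref{eq:theory_new:local_gain} is purely algebraic, so I will prove it by writing every factor on the right-hand side as an explicit ratio of power sums of $p_0(\cdot\mid s)$ and checking that everything cancels down to the left-hand side. Fix $s$ and abbreviate $p=p_0(\cdot\mid s)$, $T=T_{\textsf{train}}$, and, for any $\gamma>0$ and $B\subseteq\mathcal{V}$, $Z_\gamma(B)=\sum_{u\in B}p(u)^\gamma$.

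\textbf{Step 1: evaluate the left-hand side.} By the preceding proposition, i.e.\ \Cref{eq:theory_new:q_tau} (equivalently \Cref{lem:temp_compose} applied on the fixed support $S_s$), we have $q_{s,\tau}(A)=Z_{1/(T\tau)}(A)/Z_{1/(T\tau)}(S_s)$. By \Cref{eq:theory_new:barp}, this equals $\pi_{s,1/(T\tau)}(A)$. So the left-hand side is exactly the numerator of the reshaping ratio.

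\textbf{Step 2: evaluate the remaining factors.} From the definitions:
\begin{itemize}
\item $p_{0,\tau}(A\mid s)=Z_{1/\tau}(A)/Z_{1/\tau}(\mathcal{V})$;
\item $m_s(\tau)=Z_{1/\tau}(S_s)/Z_{1/\tau}(\mathcal{V})$, by \Cref{eq:theory_new:m_tau};
\item $\pi_{s,1/\tau}(A)=Z_{1/\tau}(A)/Z_{1/\tau}(S_s)$, which uses $A\subseteq S_s$.
\end{itemize}
Taking the ratio of the first and third items, the factors $Z_{1/\tau}(A)$ cancel, leaving $p_{0,\tau}(A\mid s)/\pi_{s,1/\tau}(A)=Z_{1/\tau}(S_s)/Z_{1/\tau}(\mathcal{V})=m_s(\tau)$. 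Hence
\[
\frac{1}{m_s(\tau)}\cdot\frac{p_{0,\tau}(A\mid s)}{\pi_{s,1/\tau}(A)}=1,
\]
and the right-hand side of \Cref{eq:theory_new:local_gain} reduces to $\pi_{s,1/(T\tau)}(A)$. This equals the left-hand side by Step 1.

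\textbf{Step 3: check well-definedness.} The hypothesis $\pi_{s,1/\tau}(A)>0$ gives $Z_{1/\tau}(A)>0$, so the division is legitimate. Since $A\subseteq S_s$, this also gives $m_s(\tau)>0$ and $p_{0,\tau}(A\mid s)>0$. The only place to be careful is that $\textsf{Temper}$ is applied to the full vocabulary in $p_{0,\tau}$ but restricted to $S_s$ in $\pi$ and $q_{s,\tau}$. The normalizer bookkeeping in Step 2 tracks exactly this difference.

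The main obstacle is not mathematical but interpretive: matching the prose labels to the factors. The factor $1/m_s(\tau)\ge 1$ is the support-compression gain. The ratio $\pi_{s,1/(T\tau)}(A)/\pi_{s,1/\tau}(A)$ is the reshaping gain, which equals $1$ at $T=1$, and whose direction for $T\neq1$ follows from \Cref{eq:theory_new:direction_reshaping}. I would add these as brief remarks after the algebra.
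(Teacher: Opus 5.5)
Your proof is correct and is exactly the direct algebraic verification the paper has in mind when it calls the decomposition ``algebraically exact'' (the paper gives no further argument). The key step is the cancellation $p_{0,\tau}(A\mid s)/m_s(\tau)=\pi_{s,1/\tau}(A)$, which collapses the right-hand side to $\pi_{s,1/(T_{\textsf{train}}\tau)}(A)$; this equals $q_{s,\tau}(A)$ by \Cref{eq:theory_new:q_tau}.

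One small caveat on your closing remark: \Cref{eq:theory_new:direction_reshaping} gives only the local sign of $\partial_\gamma\log\pi_{s,\gamma}(A)$. It therefore fixes the direction of the reshaping ratio only when that sign is constant between $\gamma=1/(T_{\textsf{train}}\tau)$ and $\gamma=1/\tau$, as it is, for instance, when $A$ is the top-ranked token of $S_s$.
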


\begin{remark}[Two limiting cases]
Two limiting cases isolate the roles of the two ingredients in \SD{}. If training-time truncation is absent so that $S_s=\mathcal{V}$, then $m_s(\tau)=1$ and
\begin{equation}
\label{eq:theory_new:no_trunc}
q_{s,\tau}(v)
\;=\;
p_{0,T_{\textsf{train}}\tau}(v \mid s).
\end{equation}
In this regime, \SD{} reduces to pure temperature composition.

If instead $T_{\textsf{train}}=1$, then the reshaping ratio in
\Cref{eq:theory_new:local_gain} becomes $1$, and for any event
$A \subseteq S_s$,
\begin{equation}
\label{eq:theory_new:unit_temp}
q_{s,\tau}(A)
\;=\;
\frac{p_{0,\tau}(A \mid s)}{p_{0,\tau}(S_s \mid s)}.
\end{equation}
In this regime, \SD{} reduces to pure support compression inside the retained support. Without truncation there is no support-compression gain; without non-unit training temperature there is no within-support reshaping gain.
\end{remark}

\paragraph{Summary.}
The lock-fork asymmetry does not require different objectives or context-specific hyperparameters. It arises because the same global objective acts on different retained-support geometries. At locks, support compression dominates, removing diffuse tail mass and making the remaining head more robust to evaluation-time decoding. At forks, within-support reshaping has room to preserve and redistribute useful alternatives. Under a local ideal-fit approximation, the same picture carries through to evaluation time: training-time and evaluation-time temperatures compose inside the retained support, and the student's local gain separates into a support-compression channel and a within-support reshaping channel. The next subsection focuses on one especially important consequence of this picture: the student can become lower-entropy overall while preserving the conditional head entropy needed for exploration.

\subsection{Why \SD{} Can Lower Total Entropy While Preserving Conditional Head Entropy for Exploration}
\label{app:theory_new:entropy}

A central empirical pattern in the paper is that the \SD{} student can become more concentrated overall while remaining more exploitable by evaluation-time temperature. At first glance, these two statements can seem to pull in opposite directions: if the model is lower-entropy after training, why does it still preserve the kind of diversity that supports exploration? The key point is that these statements concern different objects. Total entropy measures uncertainty over the full vocabulary, whereas evaluation-time temperature acts on the conditional distribution inside the retained support. We now make that distinction explicit.

\paragraph{Exact entropy decomposition.}
To separate these effects, write the student's conditional distribution on the complement of $S_s$ as
\[
u_\theta(v \mid s)
\;=\;
\frac{p_\theta(v \mid s)\,\mathbf{1}\{v \notin S_s\}}
     {1-\textsf{KeptMass}_\theta(s)},
\]
whenever $1-\textsf{KeptMass}_\theta(s) > 0$. Expanding the Shannon entropy of
$p_\theta(\cdot \mid s)$ over the disjoint sets $S_s$ and $S_s^c$ gives
\begin{equation}
\label{eq:theory_new:entropy_decomp}
H\!\bigl(p_\theta(\cdot \mid s)\bigr)
\;=\;
\underbrace{h_2\!\bigl(\textsf{KeptMass}_\theta(s)\bigr)}_{\textup{gate entropy}}
\;+\;
\underbrace{\textsf{KeptMass}_\theta(s)\,
H\!\bigl(p_\theta(\cdot \mid s, S_s)\bigr)}_{\textup{head entropy}}
\;+\;
\underbrace{\bigl(1-\textsf{KeptMass}_\theta(s)\bigr)\,
H\!\bigl(u_\theta(\cdot \mid s)\bigr)}_{\textup{tail entropy}},
\end{equation}
where $h_2(\pi) = -\pi \log \pi - (1-\pi)\log(1-\pi)$ is the binary entropy function.

This decomposition separates three distinct channels through which \SD{} can change total entropy. First, as the student moves more probability mass onto the retained support, the binary gate term changes. Second, as outside-support mass shrinks, the contribution of the tail shrinks with it. Third, the conditional entropy of the retained head can itself change, either decreasing at lock-like contexts or increasing at fork-like contexts depending on how much room the retained support leaves for within-head reshaping.

\paragraph{Why total entropy can still fall when $T_{\textsf{train}}>1$.}
With the three channels in hand, the apparent paradox becomes straightforward to resolve. The support-compression mechanism identified earlier reduces total entropy through two large-scale effects at once: it suppresses diffuse outside-support mass, and in the high-retained-mass regime relevant here it also lowers the binary gate term by pushing more probability mass onto the retained support.

The within-support reshaping term can act differently. At lock-like contexts, the retained head is already close to singleton, so within-head entropy has little room to increase and may decrease further. At fork-like contexts, by contrast, $T_{\textsf{train}}>1$ can flatten the retained head and therefore increase its conditional entropy locally. But this local increase is bounded by the size of the retained support, whereas the gate and tail reductions operate on the entire complement of that support. Total entropy can therefore decrease even when the student preserves or even increases conditional head entropy at the subset of contexts where exploration remains useful.

\paragraph{Evaluation-time temperature acts on conditional head entropy.}
The operational role of evaluation-time temperature is now easy to state. Temperature does not act on the full-vocabulary entropy decomposition directly; it acts on the conditional distribution inside whatever head remains after training and truncation. Applying the fixed-support entropy-response calculation to the retained-head distribution gives
\begin{equation}
\label{eq:theory_new:eval_entropy_response}
\frac{d}{d\tau}
H\!\Bigl(\textsf{Temper}_{\tau}^{S_s}[p_\theta(\cdot \mid s, S_s)]\Bigr)
\;=\;
\frac{
\mathrm{Var}_{\textsf{Temper}_{\tau}^{S_s}[p_\theta(\cdot \mid s, S_s)]}
\!\bigl[\log p_\theta(v \mid s, S_s)\bigr]
}{\tau^3}
\;\ge\;
0.
\end{equation}
If the retained head is effectively singleton, or nearly uniform, the variance term is near zero and temperature has little effect. If the retained head contains several comparable but non-identical tokens, the variance term is substantial and evaluation-time temperature changes the operational policy much more strongly. This derivative therefore gives the formal version of the main-text intuition: lower total entropy and stronger exploration are not contradictory because they concern different levels of the distribution.

\paragraph{Why the teacher can be nearly temperature-inert while the student need not be.}
The variance criterion in \Cref{eq:theory_new:eval_entropy_response} also helps explain the empirical asymmetry between frozen model and student observed in \Cref{fig:empirical_triptych}. At many contexts, the frozen model's evaluation-time distribution is already dominated by a single token: the kept set under top-$k$/top-$p$ is effectively singleton, and the log-probability variance within that singleton is therefore negligible. In such a context, changing $\tau$ barely changes the operational policy because there is no meaningful within-head spread for temperature to act on.

\SD{} changes this asymmetrically. At lock-like contexts, the retained head can stay tiny, so decoding remains nearly temperature-inert. At fork-like contexts, training can suppress outside-support mass while preserving a nontrivial multi-token head, so evaluation-time temperature remains an effective control knob. In this sense, \SD{} removes the wrong kind of uncertainty while preserving the right kind.

\paragraph{Connection back to the objective.}
The entropy picture is not a separate mechanism; it is another view of the same three-term objective from \Cref{eq:three_term}. The gate term $-\log \textsf{KeptMass}_\theta(s)$ drives support acquisition and tail removal, typically lowering the gate term in the high-retained-mass regime and shrinking the tail contribution in \Cref{eq:theory_new:entropy_decomp}. The R\'enyi-shaping term acts only inside the retained head, and for the typical \SD{} regime $T>1$, it smooths that head rather than the full vocabulary. The KL anchor keeps this smoothing aligned with the teacher's retained preferences.

This is why the lock-fork distinction reappears naturally inside the entropy decomposition. At locks, the retained head is nearly singleton, so the visible effect is tail suppression. At forks, the retained head contains several plausible continuations, so the same objective can remove tail mass while preserving or increasing uncertainty inside the head itself. The student can therefore be globally sharper yet locally more explorable.

\paragraph{Summary.}
The apparent contradiction between lower total entropy and preserved exploration is only a mismatch of levels. \SD{} can lower full-vocabulary entropy by moving probability mass onto the retained support and suppressing diffuse outside-support mass. At the same time, it can preserve or increase conditional head entropy at the contexts where several plausible continuations survive truncation. Evaluation-time temperature acts on that conditional head, not on the discarded tail. The next subsection uses this same perspective to explain why decode-only tuning on the frozen model cannot recover the effect of changing the model itself.

\subsection{Why Decode-Only Tuning Cannot Match \SD{}}
\label{app:theory_new:decode_only}

The previous subsections characterize what \SD{} changes during training. We now ask whether a \emph{single global} decode-only policy on the frozen model could reproduce the same effect. In the special no-truncation ideal-fit case from \Cref{eq:theory_new:no_trunc}, the answer can be yes: temperature composition alone makes local matching possible. The relevant question for the truncated regime studied in the paper is different: can one global decode-only policy match the student across heterogeneous contexts? In general, no. Under the local ideal-fit approximation, the student at a fixed context still has the form of a power transform on a retained prefix; the limitation is that decode-only tuning must apply one global policy to the frozen model's original cumulative geometry, whereas \SD{} changes that geometry itself.

\paragraph{Decoding operators.}
Let $p_{(1)}(s) \ge p_{(2)}(s) \ge \cdots \ge p_{(V)}(s)$ be the frozen-model probabilities at context $s$, sorted by decreasing probability. Write $\alpha = 1/\tau$, where $\tau=T_{\textsf{eval}}$. A decode-only policy composes three operators in some fixed order $\sigma$.

\emph{Temperature scaling:}
\begin{equation}
\label{eq:theory_new:op_temp}
(\mathcal{T}_\alpha\, p)_{(i)}
\;=\;
\frac{p_{(i)}^\alpha}{\sum_{j=1}^{V} p_{(j)}^\alpha}.
\end{equation}

\emph{Top-$k$ truncation:}
\begin{equation}
\label{eq:theory_new:op_topk}
(\mathcal{K}_k\, p)_{(i)}
\;=\;
\frac{p_{(i)}\,\mathbf{1}[i \le k]}{\sum_{j=1}^{k} p_{(j)}}.
\end{equation}

\emph{Top-$p$ truncation}, with prefix length
$m_{\text{top-}p}(p)=\min\{m : \sum_{i=1}^{m} p_{(i)} \ge \text{top-}p\}$:
\begin{equation}
\label{eq:theory_new:op_topp}
(\mathcal{P}_{\text{top-}p}\, p)_{(i)}
\;=\;
\frac{p_{(i)}\,\mathbf{1}[i \le m_{\text{top-}p}(p)]}{\sum_{j=1}^{m_{\text{top-}p}(p)} p_{(j)}}.
\end{equation}
These operators slightly extend the empirical sweep in the paper: the experiments focus on the standard vLLM ordering, while the analysis below asks what any fixed ordering of the same ingredients can and cannot do.

\paragraph{Normal form: all operator orderings collapse.}
A natural objection is that the empirical sweep in the paper tests only one practical operator ordering, namely temperature $\to$ top-$k$ $\to$ top-$p$ as implemented by vLLM. Perhaps a different ordering would close the gap. The following proposition shows that it cannot: all fixed orderings collapse to the same restricted normal form.

\begin{proposition}[Normal form of decode-only policies]
\label{prop:theory_new:normal_form}
For any fixed permutation $\sigma$ of $\mathcal{T}_\alpha$, $\mathcal{K}_k$, and $\mathcal{P}_{\text{top-}p}$, the final decode-only distribution can be written as
\begin{equation}
\label{eq:theory_new:normal_form}
\mu^\sigma_{s}((i))
\;=\;
\frac{p_{(i)}(s)^\alpha\,\mathbf{1}[i \le m^\sigma_s]}
{\sum_{j=1}^{m^\sigma_s} p_{(j)}(s)^\alpha},
\end{equation}
for some prefix length $m^\sigma_s$ that depends on the order $\sigma$, the parameters $(\alpha,k,\text{top-}p)$, and the context $s$.
\end{proposition}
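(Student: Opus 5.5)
The plan is to prove a closure statement. Consider the class $\mathcal{F}_s$ of distributions of the form $\frac{p_{(i)}^\beta \mathbf{1}[i\le m]}{\sum_{j\le m} p_{(j)}^\beta}$, with $\beta\in\{1,\alpha\}$ and $1\le m\le V$. I will show that each of $\mathcal{T}_\alpha$, $\mathcal{K}_k$, and $\mathcal{P}_{\text{top-}p}$ maps this class into itself. Two updates occur: the exponent becomes $\alpha$ after temperature is applied, and the prefix length can only shrink under truncation. The starting distribution $p(\cdot\mid s)$ lies in the class with $\beta=1$ and $m=V$. Since $\mathcal{T}_\alpha$ appears exactly once in any permutation $\sigma$, the final exponent is $\alpha$, and the final prefix length is some $m^\sigma_s$ determined by the successive operations. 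This gives \Cref{eq:theory_new:normal_form}.

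The key observation is \emph{rank preservation}. For $\alpha>0$, the map $x\mapsto x^\alpha$ is strictly increasing, so the ordering $p_{(1)}\ge\cdots\ge p_{(V)}$ is the same for $p$ and for any tempered or restricted version of it. I fix ties once and for all by the deterministic tie-breaking rule, applied consistently to all of these versions.

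The closure checks then go as follows.
\begin{enumerate}
\item \emph{Temperature.} Applying $\mathcal{T}_\alpha$ to a member with $\beta=1$ and prefix $m$ gives weights proportional to $p_{(i)}^\alpha\mathbf{1}[i\le m]$, with normalizers cancelling as in \Cref{lem:temp_compose}. Zero entries stay zero, since $0^\alpha=0$, so the prefix is unchanged.
\item \emph{Top-$k$.} Because ranks are preserved, the top $k$ entries of a prefix-supported distribution are exactly indices $1,\dots,\min(k,m)$. Renormalizing keeps the same exponent, so the new prefix length is $\min(k,m)$.
\item \emph{Top-$p$.} The retained set is by definition a prefix of the ranking. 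By rank preservation it is an initial segment $\{1,\dots,m'\}$ with $m'\le m$. Here $m'$ is the smallest index at which the cumulative mass of the current distribution, whose entries are proportional to $p_{(i)}^\beta$ on $i\le m$, reaches $\text{top-}p$. After renormalizing, the exponent is again unchanged.
\end{enumerate}

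For the final write-up I will make the prefix lengths explicit. For the vLLM order $\mathcal{T}\to\mathcal{K}\to\mathcal{P}$, we get $m^\sigma_s=\min\{m: \sum_{i\le m}p_{(i)}^\alpha/\sum_{j\le k}p_{(j)}^\alpha\ge\text{top-}p\}$, capped at $k$. In orders where $\mathcal{P}$ precedes $\mathcal{T}$, the top-$p$ threshold is evaluated on untempered cumulative mass. The point of the statement is that the dependence on $\sigma$ enters only through $m^\sigma_s$.

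The main obstacle is the bookkeeping in the top-$p$ step. When top-$p$ is applied after another truncation, its cumulative sums are computed on an already renormalized prefix, not on $p$ itself. I will handle this by noting that renormalization by a positive constant only rescales the threshold comparison, so the retained set is still the initial segment described above. I will also note that top-$p$ always keeps at least one token, so $m^\sigma_s\ge1$ and the denominator in \Cref{eq:theory_new:normal_form} is positive.

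Ties need separate care, because a tie between distinct tokens could make "top-$k$" ambiguous. Fixing one deterministic total order consistent with $p$ and using it throughout resolves this. Temperature does not create new ties, because $x\mapsto x^\alpha$ is injective, and it does not break existing ones.
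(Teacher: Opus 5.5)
Your proof is correct and follows the same route as the paper. Both arguments rest on three facts: $x\mapsto x^\alpha$ preserves the ranking, each truncation keeps a prefix of that ranking, and temperature then acts as a single power transform on the surviving prefix, with $\sigma$ affecting only where the boundary $m^\sigma_s$ lands. Your invariant-class bookkeeping tracks the exponent $\beta\in\{1,\alpha\}$ and the shrinking prefix $m$, and handles ties and zero entries explicitly, so it spells out the paper's short argument in more detail. One small wording fix: the class is closed under $\mathcal{T}_\alpha$ only on its $\beta=1$ members, but that is all you use, since $\mathcal{T}_\alpha$ occurs exactly once in $\sigma$.
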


\begin{proof}
Each operator preserves rank-prefix structure.
First, $\mathcal{T}_\alpha$ is monotone in $p$, so it preserves the ranking and keeps the full support.
Second, $\mathcal{K}_k$ keeps the top-$k$ prefix.
Third, $\mathcal{P}_{\text{top-}p}$ keeps the smallest prefix whose cumulative mass reaches the chosen top-$p$ threshold.

Therefore, any sequence of these operators produces a distribution supported on some prefix of the frozen-model ranking. Once the support has been reduced to a prefix $\{(1),\dots,(m)\}$, applying temperature yields probabilities proportional to $p_{(i)}^\alpha$ on that same prefix. The only thing an ordering can change is where the prefix boundary lands, because top-$p$ is evaluated against different intermediate distributions depending on whether temperature has already been applied. It cannot change the fact that the final decoder acts as a single power transform on a prefix of the original ranking.
\end{proof}

This proposition already shows the basic limitation of decode-only tuning. Reordering can move the prefix boundary, but it cannot create a new kind of distributional transformation. It also clarifies why this does not contradict the local ideal-fit picture above: at a single context, the student can still lie in the same normal-form family, but training changes the underlying cumulative curves and ranking that one global decoder must serve across contexts.

\paragraph{Two structural invariants.}
The normal form immediately implies two constraints that no reordering can break.

\begin{corollary}[Prefix rigidity]
\label{cor:theory_new:prefix_rigidity}
For any order $\sigma$,
$\mathrm{supp}(\mu^\sigma_s)=\{(1),\dots,(m^\sigma_s)\}$.
To include rank-$(r)$ token, the decoder must also include every higher-ranked token $(1),\dots,(r{-}1)$.
\end{corollary}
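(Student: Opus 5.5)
The plan is to read Corollary~\ref{cor:theory_new:prefix_rigidity} off the normal form in Proposition~\ref{prop:theory_new:normal_form}. Fix an ordering $\sigma$ and a context $s$. By that proposition,
\[
\mu^\sigma_s((i)) \;\propto\; p_{(i)}(s)^\alpha\,\mathbf{1}[i \le m^\sigma_s].
\]
The first step is to show that the support is exactly the prefix $\{(1),\dots,(m^\sigma_s)\}$ and not a subset of it. The indicator already rules out every index $i > m^\sigma_s$. For $i \le m^\sigma_s$, the factor $p_{(i)}(s)^\alpha$ is strictly positive whenever $p_{(i)}(s) > 0$, since $\alpha > 0$.

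The only subtlety is tokens with zero probability under the frozen model. For a softmax model every $p_{(i)}(s)$ is strictly positive, so this case does not arise. If one wants to allow zeros anyway, the prefix length can be trimmed: each operator in the proof of Proposition~\ref{prop:theory_new:normal_form} preserves positivity exactly on the positive-probability tokens. So one may redefine $m^\sigma_s$ as the largest index in the surviving prefix with positive mass, and the normal form still holds. With that convention the support equality follows.

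The second claim is a direct consequence of the first. If rank $(r)$ lies in $\mathrm{supp}(\mu^\sigma_s)$, then $r \le m^\sigma_s$. Hence every $j < r$ also satisfies $j \le m^\sigma_s$, so $(1),\dots,(r-1)$ are in the support too.

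The ranking is that of $p_0$ under a fixed deterministic tie-breaking rule. I will note that temperature preserves this ranking, because $x \mapsto x^\alpha$ is strictly increasing. This means "prefix" has the same meaning before and after every operator, which the normal-form proof already uses.

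The main obstacle, modest as it is, is this positivity and zero-probability edge case, together with consistent tie-breaking. Everything else is immediate from Proposition~\ref{prop:theory_new:normal_form}.
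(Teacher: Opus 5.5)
Your proposal is correct and follows the paper's route: the paper gives no separate proof and reads the corollary directly off the normal form in Proposition~\ref{prop:theory_new:normal_form}. Your extra care about strict positivity of softmax probabilities, trimming zero-probability tokens, and consistent tie-breaking fills in details the paper leaves implicit.
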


\begin{corollary}[Power rigidity]
\label{cor:theory_new:power_rigidity}
For any surviving pair $i,j \le m^\sigma_s$,
\begin{equation}
\label{eq:theory_new:power_rigidity}
\log\frac{\mu^\sigma_s((i))}{\mu^\sigma_s((j))}
\;=\;
\alpha\,\log\frac{p_{(i)}(s)}{p_{(j)}(s)}.
\end{equation}
All pairwise log-odds inside the kept support are scaled by the same global factor $\alpha = 1/T_{\textsf{eval}}$.
\end{corollary}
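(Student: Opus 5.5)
This follows directly from the normal form of \Cref{prop:theory_new:normal_form}, so I will simply read off the ratio of two surviving entries of \Cref{eq:theory_new:normal_form}. Fix an order $\sigma$, a context $s$, and parameters $(\alpha,k,\text{top-}p)$. The proposition gives a prefix length $m^\sigma_s$ together with the explicit form $\mu^\sigma_s((i)) = p_{(i)}(s)^\alpha \mathbf{1}[i\le m^\sigma_s]/Z$. Here $Z=\sum_{j\le m^\sigma_s} p_{(j)}(s)^\alpha$ is a single normalizer, shared by all surviving indices.

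First I check that the ratio is well defined. For $i,j\le m^\sigma_s$ both indicators equal one. The surviving probabilities are strictly positive, because truncation operators only keep ranks with positive frozen-model mass. If some $p_{(i)}=0$ I will restrict attention to positive-mass prefixes, which is implicit in the corollary since the log-odds would otherwise be undefined. Under this positivity both $\mu^\sigma_s((i))$ and $\mu^\sigma_s((j))$ are positive.

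Next I take the quotient:
\[
\frac{\mu^\sigma_s((i))}{\mu^\sigma_s((j))}=\frac{p_{(i)}(s)^\alpha/Z}{p_{(j)}(s)^\alpha/Z}=\Bigl(\frac{p_{(i)}(s)}{p_{(j)}(s)}\Bigr)^{\alpha}.
\]
Taking logarithms gives \Cref{eq:theory_new:power_rigidity}.

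Finally, the factor $\alpha=1/T_{\textsf{eval}}$ is the same for all pairs and all contexts. The only $\sigma$-, $k$-, $\text{top-}p$- or $s$-dependence in the normal form sits in $m^\sigma_s$, and that affects only which pairs survive, not the exponent. The one point needing care is confirming that, in the normal form, the exponent really is the single global $\alpha$ regardless of where temperature appears in $\sigma$. The proof of \Cref{prop:theory_new:normal_form} covers this: truncation followed by renormalization preserves within-support ratios, and temperature raises every ratio to the power $\alpha$ exactly once. So the composed map on surviving pairs is always $x\mapsto x^\alpha$ on ratios, and the exponent does not depend on ordering. I expect no further obstacle.
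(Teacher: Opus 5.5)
Your proposal is correct and follows the paper's route exactly: the paper simply asserts that the normal form of \Cref{prop:theory_new:normal_form} ``immediately implies'' power rigidity, and taking the ratio of two surviving entries so that the shared normalizer cancels is precisely that implication. One small caveat: top-$k$ can retain zero-mass ranks when $k$ exceeds the number of positive-mass tokens, so your claim that truncation only keeps positive-mass ranks is slightly overstated, but your fallback restriction to positive-mass ranks (automatic for softmax models) already covers it.
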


These two rigidities are the structural reason the decode-only sweep curves in \Cref{fig:ssd_vs_sweep} are so flat. Prefix rigidity means a lower-ranked useful branch cannot be admitted without also admitting every higher-ranked token above it, even if some of those higher-ranked tokens are distractors. Power rigidity means the decoder cannot flatten one part of the head while sharpening another, cannot widen the head-tail gap without simultaneously changing within-head ratios in the same global way, and cannot treat some head tokens as useful alternatives while suppressing others as noise.

\paragraph{The standard pipeline makes the coupling explicit.}
The normal-form result already shows that decode-only tuning is limited, but the standard practical pipeline makes the conflict especially transparent. Fix a context $s$. After temperature and top-$k$, the surviving distribution is
\begin{equation}
\label{eq:theory_new:decode_pi_tauk}
\tilde{\pi}^{(\tau,k)}_{(i)}(s)
\;=\;
\frac{p_{(i)}(s)^{1/\tau}\,\mathbf{1}[i \le k]}
{\sum_{j=1}^{k} p_{(j)}(s)^{1/\tau}}.
\end{equation}
Top-$p$ then keeps the smallest prefix whose cumulative mass reaches the chosen threshold. Define the prefix mass by
\begin{equation}
\label{eq:theory_new:decode_prefix_mass}
S_{s,m}(\tau,k)
\;=\;
\frac{\sum_{i=1}^{m} p_{(i)}(s)^{1/\tau}}
{\sum_{j=1}^{k} p_{(j)}(s)^{1/\tau}},
\qquad m \le k.
\end{equation}
Applying the escort-covariance identity from \Cref{eq:theory_new:escort_cov} to the top-$k$ restricted distribution gives
\begin{equation}
\label{eq:theory_new:Sm_monotone}
\frac{d}{d\tau} S_{s,m}(\tau,k) \;\le\; 0.
\end{equation}
So increasing temperature makes the top of the ranked list accumulate mass more slowly, forcing the decoder to retain more tokens to reach the same top-$p$ threshold. The resulting prefix length
\[
m_s(\tau,k,\text{top-}p)
=
\min\{m \le k : S_{s,m}(\tau,k) \ge \text{top-}p\}
\]
is therefore nondecreasing in $\tau$, $k$, and top-$p$.

This gives a very concrete interpretation of the three practical decoding knobs. Increasing $\tau$ makes the retained head \emph{flatter}, but it also makes the retained prefix \emph{longer}. Increasing $k$ or top-$p$ makes the retained prefix longer, but neither changes the internal geometry of the head in any context-dependent way. There is no decode-only knob that flattens a useful fork head while leaving the support boundary of lock-like contexts unchanged. The knob that helps forks is exactly the knob that destabilizes locks.

Under the standard pipeline, a single decode-only policy can satisfy both lock and fork requirements simultaneously only if $r_{\mathrm{F}} \le k$ and
\begin{equation}
\label{eq:theory_new:feasible_interval}
S_{s_{\mathrm{F}},\,r_{\mathrm{F}}-1}(\tau,k)
\;<\;
\text{top-}p
\;\le\;
S_{s_{\mathrm{L}},\,r_{\mathrm{L}}}(\tau,k).
\end{equation}
By \Cref{eq:theory_new:Sm_monotone}, increasing $\tau$ to help the fork simultaneously lowers the lock-side upper bound. The same knob that makes it easier to retain more fork alternatives therefore makes it harder to keep lock supports short. This is the precision-exploration conflict in its most operational form.

Reordering the operators can shift where the prefix boundary lands, and when top-$p$ precedes top-$k$ the final support can be clipped again by $k$. But no reordering escapes the two rigidities above: the final decoder still acts as a single global exponent on a prefix of the frozen ranking. Reordering can shift the compromise; it cannot create a context-dependent transformation of the frozen model's cumulative geometry.

\paragraph{Why \SD{} has a degree of freedom decode-only tuning lacks.}
\SD{} escapes this limitation because training changes the base distribution itself from $p_0(\cdot \mid s)$ to $p_\theta(\cdot \mid s)$. Once the distribution changes, the cumulative curves seen by the decoder can change as well:
\[
S_{s,m}(\tau,k; p_0)
\;\longrightarrow\;
S_{s,m}(\tau,k; p_\theta),
\]
This is the degree of freedom that no decode-only reordering possesses. In the truncated regime, \SD{} can remove diffuse tail mass at lock-like contexts while preserving a cleaner multi-token head at fork-like contexts. When those cumulative-curve changes move in opposite directions at locks and forks, the feasible interval in \Cref{eq:theory_new:feasible_interval} widens. This is the sense in which \SD{} changes the problem that the decoder is solving: the decoder is no longer operating on the same cumulative geometry.

Even beyond support boundaries, power rigidity from \Cref{eq:theory_new:power_rigidity} still constrains any reordered decoder to a single global exponent on all surviving log-odds. \SD{}, by contrast, changes the underlying logits directly: ranks can move, head-tail gaps can widen context-dependently, cumulative curves can change differently across contexts, and the student can become simultaneously more concentrated and more temperature-responsive. That is the structural change that persists in the empirical decode-only sweeps.

\paragraph{The ``truncate first'' objection.}
The most natural counterargument is to choose the support first with top-$p$ and only then use temperature for exploration inside that fixed support. This partially decouples support from temperature, but it does not solve the real problem. The support is still chosen from the frozen model's own cumulative curve. If that curve concentrates mass poorly at a lock, the mistake is frozen in before temperature acts. If it under-represents a useful fork branch, truncating first may exclude that branch entirely, and later temperature cannot bring it back.

So support-first decoding shifts the compromise, but it does not create the main \SD{} effect: locks becoming easier to secure and fork heads becoming cleaner without reopening the tail. The missing ingredient is still the same one: changing the model's distribution itself rather than only changing how that fixed distribution is decoded.

\paragraph{Summary.}
The distinction between \SD{} and decode-only tuning is structural, not merely parametric. All fixed orderings of temperature, top-$k$, and top-$p$ collapse to a power transform on a prefix of the frozen model's ranking. This induces prefix rigidity and power rigidity, and under the standard pipeline it ties exploration and precision to the same global decoding knob. By changing the model distribution itself, \SD{} can alter the cumulative curves and ranking that the global decoder sees. That is why a decode-only gap can persist even after the best sweep on the frozen model.

\section{Experimental Details and Additional Analyses}
\label{app:extended_experimental}

This section provides the experimental details and supplementary empirical analyses that support the results in \Cref{sec:experiments,sec:mechanism}. We first give the full training and evaluation protocol, then expand the hyperparameter and transfer results from Section~\ref{sec:experiments}, and finally provide the additional empirical details behind the toy simulation and the high-temperature stress test from Section~\ref{sec:mechanism}.

\subsection{Full Experimental Setup}
\label{app:experimental_details}

This subsection fully specifies the setup summarized in \Cref{sec:experiments}. The key point is that the training data consists only of competitive-programming prompts and the model's own sampled solutions; no verifier, execution filter, or external teacher is used at any stage.

\textbf{Prompt source and synthetic data generation.}
All \SD{} training data is synthesized from the seed subset of the rSTARcoder dataset~\citep{liu2025rstarcoder}, used only as a pool of unlabeled competitive-programming prompts. After exact string de-duplication on the whitespace-normalized problem statement, this yields ${\sim}$10{,}168 unique problems. For each prompt, we sample exactly one solution from the frozen base model using vLLM~\citep{kwon2023vllm} (v0.11.0, tensor-parallel across 8 GPUs) with a 128K maximum sequence length. The per-model decoding configuration used for this sampling step is listed in \Cref{tab:sd_settings}. We apply only a minimal degeneracy filter to remove clearly unusable outputs, such as empty responses and single-line stubs. No correctness verification of any kind is applied; the retained samples are the raw, unverified \SD{} training targets.

\begin{table}[t]
\centering
\caption{\textbf{Generation-time and evaluation-time decoding settings used throughout the paper.} All configurations use 128K maximum sequence length and $N{=}1$ sample per prompt during synthetic data generation.}
\label{tab:sd_settings}
\small
\setlength{\tabcolsep}{4pt}
\begin{tabular}{l ccc ccc}
\toprule
& \multicolumn{3}{c}{\textbf{Generation}} & \multicolumn{3}{c}{\textbf{Evaluation}} \\
\cmidrule(lr){2-4} \cmidrule(lr){5-7}
\textbf{Model} & $T_{\textsf{train}}$ & top-$k$ & top-$p$ & $T_{\textsf{eval}}$ & top-$k$ & top-$p$ \\
\midrule
Llama-3.1-8B-Instruct & 0.8 & 20 & 0.8 & 0.7 & 20 & 0.8  \\
Qwen3-4B-Instruct  & 1.6 & 20 & 0.8 & 1.1 & 20 & 0.8  \\
Qwen3-4B-Thinking  & 1.1  & 20 & 0.95 & 0.7 & 20 & 0.95 \\
Qwen3-30B-Instruct & 1.6 & 20 & 0.8 & 0.9 & 20 & 0.8  \\
Qwen3-30B-Thinking & 1.2 & 20 & 0.95 & 0.7 & 20 & 0.95 \\
GPT-OSS-20B (high) & 1.2 & none & 1.0 & 1.0 & 20 & 0.95 \\
\bottomrule
\end{tabular}
\vspace{0.8em}
\caption{\textbf{Baseline decoding settings used for the frozen-model comparisons in this paper.} These are the model-specific sampling configurations used for the base-model results reported in the main text.}
\label{tab:official_sampling}
\small
\setlength{\tabcolsep}{4pt}
\begin{tabular}{l ccc}
\toprule
\textbf{Model} & $T$ & top-$k$ & top-$p$ \\
\midrule
Llama-3.1-8B-Instruct       & 0.9 & none & 0.8  \\
Qwen3-4B-Instruct-2507      & 0.7 & 20 & 0.8  \\
Qwen3-4B-Thinking-2507      & 0.6 & 20 & 0.95 \\
Qwen3-30B-A3B-Instruct-2507 & 0.7 & 20 & 0.8  \\
Qwen3-30B-A3B-Thinking-2507 & 0.6 & 20 & 0.95 \\
GPT-OSS-20B (high)          & 1.0 & none & 1.0  \\
\bottomrule
\end{tabular}
\end{table}

\textbf{Prompt formatting and optimization.}
All models are queried in a single-turn chat format using their official chat templates. For instruct models, each problem is wrapped in a system message requesting a Python solution inside a markdown code block, followed by the problem statement as the user turn. For thinking models, we keep the same task presentation but do not add an explicit chain-of-thought instruction; instead, we rely on the model's native template to trigger its built-in reasoning behavior. Fine-tuning uses Megatron-LM\footnote{\url{https://github.com/NVIDIA/Megatron-LM}} on 8$\times$B200 GPUs, with expert parallelism EP${=}8$ for the MoE models. We optimize with AdamW ($\beta_1{=}0.9$, $\beta_2{=}0.95$, weight decay $0.1$) and cosine learning-rate decay from $5 \times 10^{-6}$ to $1 \times 10^{-6}$, using global batch size 32 and sequence length 65{,}536. Instruct models are trained for 2{,}500 iterations and thinking models for 300 iterations; checkpoints are saved every 250 and 50 iterations respectively.

\textbf{Evaluation and decoding settings.}
Our primary benchmark is LiveCodeBench\,v6 (LCB\,v6; 131 problems, February to May 2025), stratified by easy, medium, and hard difficulty. We also report LiveCodeBench\,v5 (374 problems, August 2024 to February 2025) as a secondary confirmation on a larger set. The primary metric throughout the paper is pass@1, and we additionally report pass@5 and per-difficulty breakdowns. All pass@$k$ estimates use 10 independent samples per problem. Frozen base-model baselines are evaluated with the model-specific baseline decoding settings in \Cref{tab:official_sampling}, while post-\SD{} models are evaluated with the student-side decoding settings in \Cref{tab:sd_settings}. Taken together, these details fully instantiate the compact setup described in \Cref{sec:experiments}.

\subsection{How \SD{} Hyperparameters Interact: Full Sweeps}
\label{app:hyperparameter_sweep}

This subsection expands the temperature-interaction results from \Cref{sec:temperature}. To keep the large ablation sweep tractable, these runs use fewer training steps than the main experiments. The main text shows representative views of the effect; here we give the fuller sweeps that make the same structure visible across both pass@1 and pass@5.

\textbf{Qwen3-30B-Instruct full sweep.}
\Cref{fig:heatmap} expands the main-text scatter plots by showing the broader search over training-time and evaluation-time decoding configurations for Qwen3-30B-Instruct.

\begin{figure}[t]
\centering
\includegraphics[width=\textwidth]{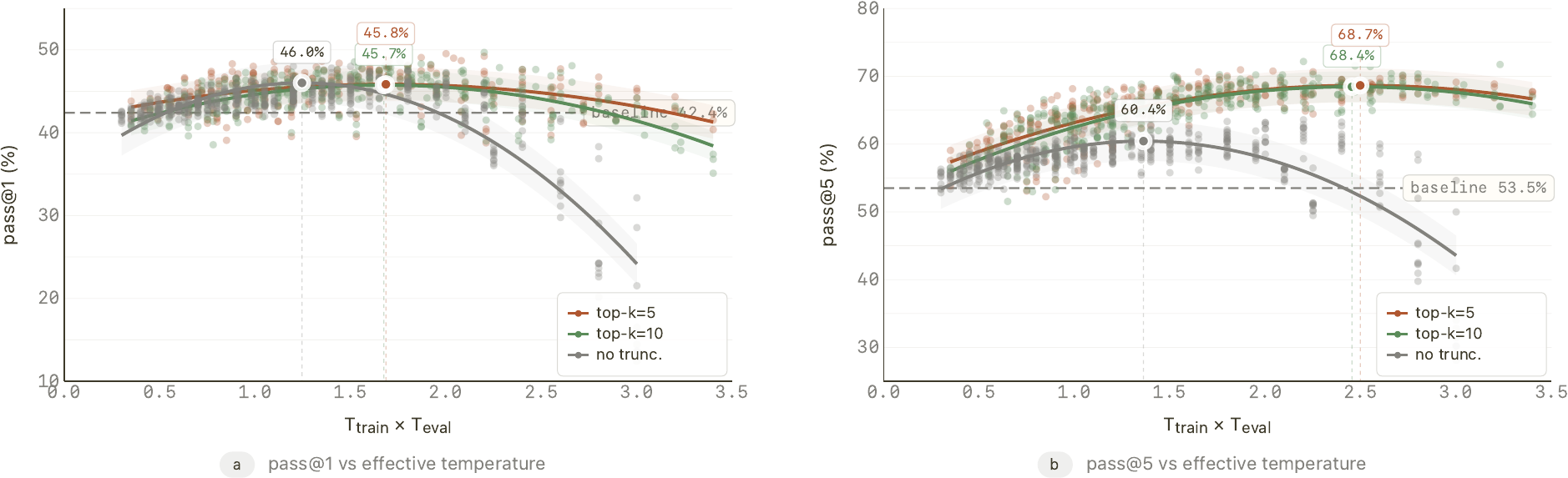}
\caption{\textbf{Across the full sweep, truncated and no-truncation settings occupy similar broad operating bands, but truncation reaches a higher pass@1 ceiling.} \SD{} hyperparameter search on LCB\,v6 for Qwen3-30B-Instruct (baseline: 42.4\% pass@1 and 53.5\% pass@5). Panels show representative configurations against effective temperature $T_{\textsf{eff}}$; curves are per-group quadratic fits, and the dashed line marks the frozen instruct baseline.}
\label{fig:heatmap}
\end{figure}

\textbf{Reading the full sweep.}
Two patterns matter. First, the successful configurations occupy a broad band rather than a single fragile optimum, which supports the claim that training-time and evaluation-time temperatures interact through a relatively stable operating region. Second, the truncated runs consistently achieve a higher pass@1 ceiling than the no-truncation runs, indicating that training-time support compression contributes something beyond temperature composition alone.

\textbf{No-truncation results and effective temperature.}
\Cref{fig:heatmap_no_trunc} isolates the no-truncation regime, where the temperature-composition picture is cleanest. In this setting, pass@1 and pass@5 are largely organized by the effective temperature $T_{\textsf{eff}} = T_{\textsf{train}} T_{\textsf{eval}}$: configurations with similar products achieve similar performance even when the two temperatures are factored differently. The broad peak near $T_{\textsf{eff}} \approx 1.2$ is consistent with the composition analysis developed in \Cref{sec:temperature,app:temp_composition}.

\textbf{Thinking-model confirmation.}
\Cref{fig:heatmap_thinking} shows that the same qualitative structure appears in Qwen3-4B-Thinking. The best-performing region again lies on a moderate diagonal band rather than at isolated values of either temperature alone. This matters because it shows that the temperature-composition pattern is not confined to instruct-style models. Taken together, the full sweeps support the two-part claim from \Cref{sec:temperature}: without training-time truncation, performance is largely organized by effective temperature, while truncation preserves that broad structure and lifts the achievable ceiling.

\begin{figure}[t]
\centering
\includegraphics[width=\textwidth]{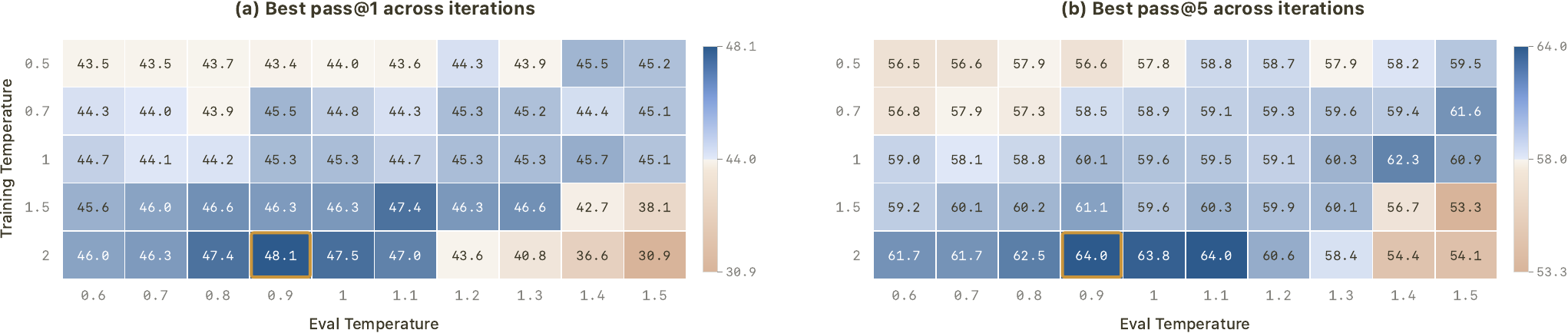}
\caption{\textbf{Without training-time truncation, performance is largely organized by the effective temperature $T_{\textsf{eff}} = T_{\textsf{train}} T_{\textsf{eval}}$.} Qwen3-30B-Instruct without top-$k$/top-$p$ truncation during \SD{} data generation. Panels show best pass@1 (left) and pass@5 (right) across the full grid of $(T_{\textsf{train}}, T_{\textsf{eval}})$ settings.}
\label{fig:heatmap_no_trunc}
\end{figure}

\begin{figure}[t]
\centering
\includegraphics[width=\textwidth]{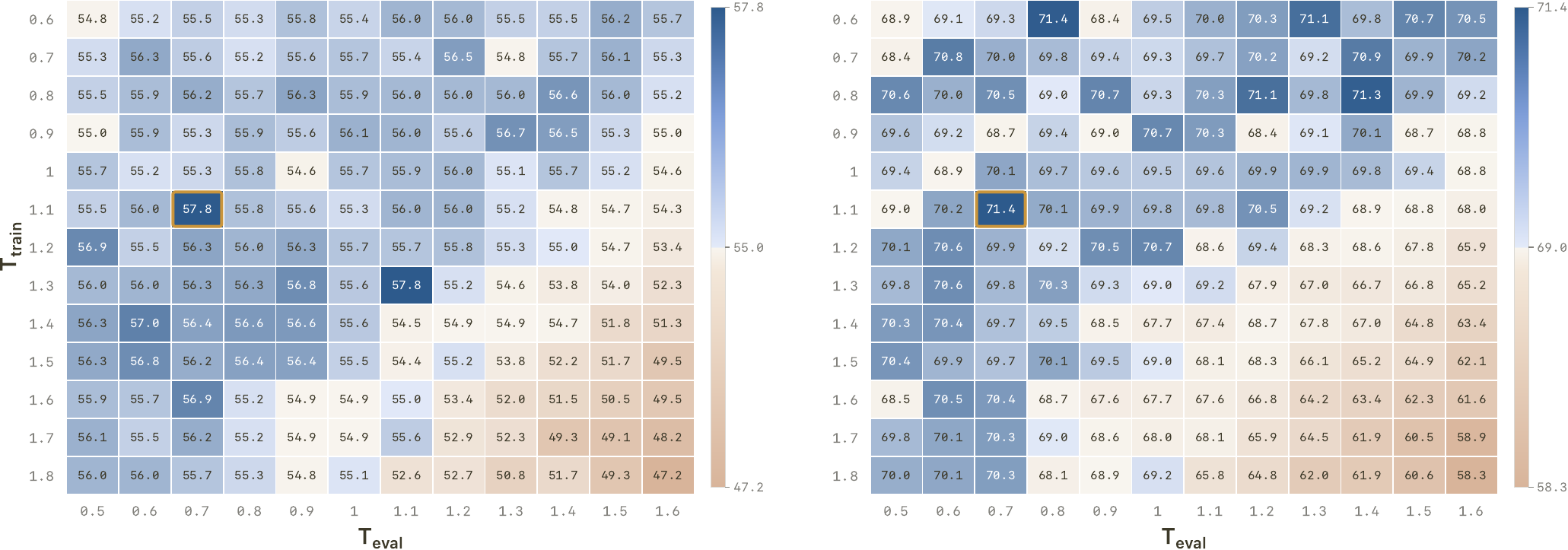}
\caption{\textbf{The same effective-temperature structure appears in Qwen3-4B-Thinking.} Best pass@1 (left) and pass@5 (right) across the full grid of $(T_{\textsf{train}}, T_{\textsf{eval}})$ settings without training-time truncation.}
\label{fig:heatmap_thinking}
\end{figure}

\subsection{Out-of-Domain Transfer}
\label{app:more_experimental_results}
\label{sec:transfer}

This subsection makes precise the main-text claim that \SD{} training on competitive-programming prompts does not substantially damage broader capabilities. We evaluate transfer on benchmarks for math reasoning, general code generation, and code understanding, and the resulting picture is scale-dependent rather than uniform.

\textbf{Benchmark scope.}
We use AIME to probe mathematical reasoning, HumanEval~\citep{chen2021evaluating} to probe general code generation in Python and Shell, and CruxEval to probe code understanding, and MMLU~\citep{hendrycks2021measuring} to probe general knowledge. These benchmarks are adjacent to competitive programming but not identical to it, which makes them a useful test of whether the student is merely over-specialized to the training domain.

\textbf{The 30B models remain broadly stable.}
The clearest pattern in \Cref{tab:transfer_results} is that the two 30B models remain broadly stable under programming-only \SD{} training. For Qwen3-30B-Instruct, all changes remain within a narrow band of roughly $\pm 2$\,pp. Qwen3-30B-Thinking shows the same behavior, with only small benchmark-level shifts and no broad collapse in capability. Both 30B models also maintain their MMLU scores to within 0.3\,pp. For the largest models in our study, programming-only \SD{} therefore appears to preserve non-competitive-programming performance reasonably well.

\textbf{Smaller models show more uneven tradeoffs.}
At smaller scale, the picture becomes more mixed. Qwen3-4B-Instruct shows clearer regressions on AIME~'24 and HumanEval Shell, even though it remains stable or slightly improved on HumanEval Python and CruxEval. Qwen3-4B-Thinking shows a different profile, with small declines on some benchmarks but substantial gains on CruxEval. Llama-3.1-8B-Instruct exhibits the sharpest tradeoff, losing ground on AIME while improving on HumanEval and CruxEval. By examining Llama's generations on AIME, we found that the model frequently fails to output a final numerical answer and instead produces a code block, leading to near-zero accuracy. The transfer story is therefore best summarized as scale-dependent: the 30B models remain broadly stable, whereas the smaller models show more uneven benchmark-specific tradeoffs.

\begin{table}[t]
\caption{\textbf{Programming-only \SD{} preserves out-of-domain performance well for the 30B models, while smaller models show more uneven benchmark-dependent tradeoffs.} Transfer results across math reasoning (AIME), general code generation (HumanEval), and code understanding (CruxEval), and general knowledge (MMLU), reported as percentages. Best within each model group is shown in \textbf{bold}.}
\label{tab:transfer_results}
\centering
\small
\setlength{\tabcolsep}{3pt}
\begin{tabular}{l cc cc cc c}
\toprule
& \multicolumn{2}{c}{\textbf{AIME}} & \multicolumn{2}{c}{\textbf{HumanEval}} & \multicolumn{2}{c}{\textbf{CruxEval}} & \textbf{MMLU} \\
\cmidrule(lr){2-3} \cmidrule(lr){4-5} \cmidrule(lr){6-7}
\textbf{Model} & '24 & '25 & Py & Sh & Input & Output & Overall \\
\midrule
Llama-3.1-8B-Instruct & \textbf{4.7} & \textbf{2.0} & 65.2 & 25.3 & 33.9 & 5.9 & \textbf{68.4} \\
\quad + \SD{} & 0.3 & 0.3 & \textbf{67.1} & \textbf{29.1} & \textbf{38.8} & \textbf{10.1} & 68.3 \\
\midrule
Qwen3-4B-Instruct & \textbf{61.3} & \textbf{44.6} & 87.8 & \textbf{47.5} & 85.5 & \textbf{77.6} & \textbf{70.6} \\
\quad + \SD{} & 55.0 & 43.3 & \textbf{88.4} & 31.7 & \textbf{86.8} & \textbf{77.6} & 70.0 \\
\midrule
Qwen3-4B-Thinking & \textbf{85.0} & 76.7 & 86.0 & \textbf{41.1} & 43.0 & 39.5 & \textbf{68.7} \\
\quad + \SD{} & 84.3 & \textbf{77.7} & \textbf{87.2} & 38.6 & \textbf{52.4} & \textbf{49.6} & \textbf{68.7} \\
\midrule
Qwen3-30B-Instruct & 77.3 & \textbf{58.5} & \textbf{95.1} & \textbf{51.3} & 90.3 & \textbf{82.9} & \textbf{80.2} \\
\quad + \SD{} & \textbf{78.9} & 58.1 & 94.5 & 50.6 & \textbf{91.6} & 81.1 & 80.1 \\
\midrule
Qwen3-30B-Thinking & \textbf{91.3} & \textbf{80.5} & 94.5 & \textbf{52.5} & \textbf{94.4} & \textbf{92.4} & \textbf{78.7} \\
\quad + \SD{} & 90.6 & 80.1 & \textbf{95.1} & 50.6 & 92.5 & 91.6 & 78.4 \\
\bottomrule
\end{tabular}
\end{table}

\FloatBarrier
\subsection{Toy Simulation: Full Specification and Additional Analyses}
\label{app:simulation_details}

This subsection provides the full specification of the toy environment introduced in \Cref{sec:mechanism:toy}. The aim is to make each part of the main-text mechanism story explicit: the FSM structure, the student induced by \SD{}, the global temperature sweep, the robustness to truncation choice, and the fork-level operational policy at the optimum.

\textbf{Why this toy is informative.}
The controlled simulation is designed so that every successful trajectory must traverse both kinds of contexts that matter for the paper's hypothesis: a fork, where several continuations remain plausible, and a sequence of locks, where only one continuation is correct but distractor mass remains in the tail. Because every transition is specified explicitly, success probability can be computed exactly under any decoding temperature and truncation setting.

\textbf{FSM specification and state archetypes.}
The toy uses a finite-state machine with vocabulary $V{=}16$. The root branches into two symmetric successful paths, each of which traverses one fork state followed by $L{=}3$ lock states before reaching PASS. At non-root states, tok0 is the unique correct continuation and all other tokens lead to FAIL. At the root, tok0 and tok1 enter the two successful paths, but tok2 is the highest-probability token and immediately fails. The three distribution archetypes are chosen to realize three distinct regimes: a fail-dominated root, a broad fork-like head, and a sharply peaked lock with a diffuse distractor tail. \Cref{fig:v19_toy_world} and \Cref{tab:toy_fsm_spec} make this structure concrete.

\begin{table}[t]
\centering
\small
\caption{\textbf{State archetypes in the toy FSM.}
The three state types instantiate a fail-dominated root, a broad fork-like head, and a sharply peaked lock with a diffuse distractor tail. In each row, the listed values are the highest-probability tokens; the remaining 12 tokens follow a geometric tail that sums to the residual mass.}
\label{tab:toy_fsm_spec}
\begin{tabular}{llp{6.5cm}}
\toprule
\textbf{State type} & \textbf{Head probabilities} & \textbf{Character} \\
\midrule
Lock ($\times 3$ per path) & $[0.750,\; 0.055,\; 0.050,\; 0.037,\;\ldots]$ & Correct at rank-1; 25\% of mass remains in a 15-token geometric tail \\
Fork ($\times 1$ per path) & $[0.148,\; 0.280,\; 0.140,\; 0.144,\;\ldots]$ & Correct at rank-2; four head tokens are near-tied \\
Root ($\times 1$) & $[0.200,\; 0.190,\; 0.329,\; 0.126,\;\ldots]$ & FAIL token (tok2) dominates; the two correct paths begin at ranks 2 and 3 \\
\bottomrule
\end{tabular}
\end{table}

\begin{figure}[t]
\centering
\includegraphics[width=\textwidth]{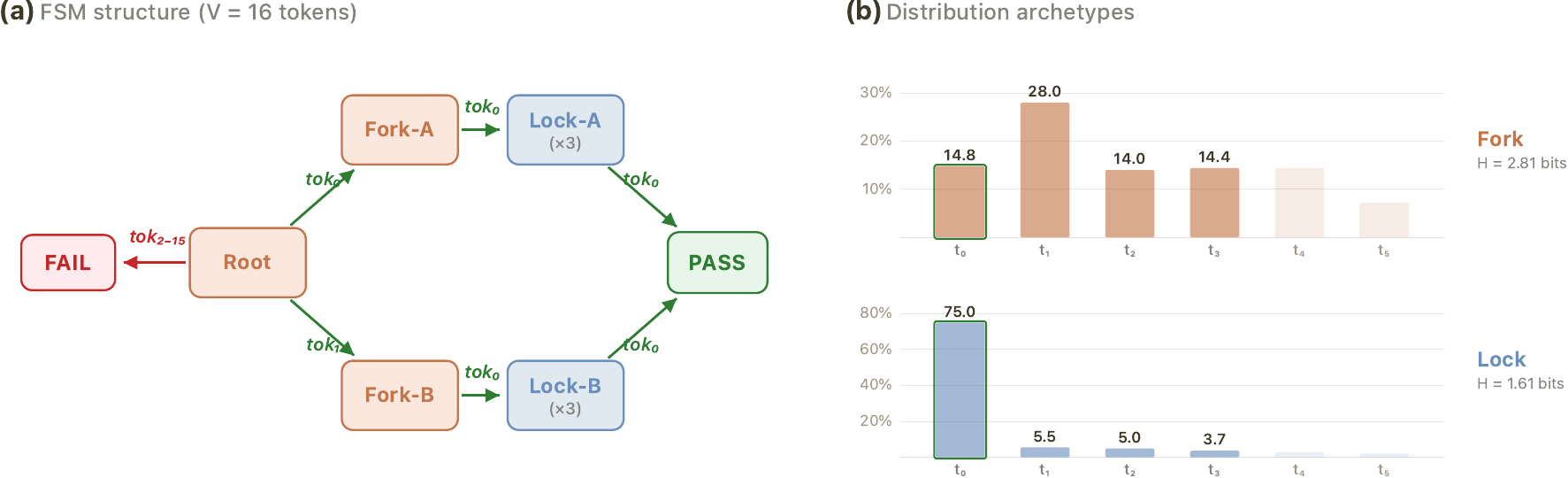}
\caption{\textbf{The toy FSM makes the lock/fork conflict explicit.} \textbf{(a)} The root branches into two symmetric successful paths, each of which traverses one fork and three locks before PASS. At non-root states, tok0 is the correct continuation and all other tokens lead to FAIL; at the root, the highest-probability token is incorrect. \textbf{(b)} The associated token-distribution archetypes instantiate a broad fork-like head and a sharp lock-like head with a distractor tail.}
\label{fig:v19_toy_world}
\end{figure}

\begin{figure}[t]
\centering
\includegraphics[width=\textwidth]{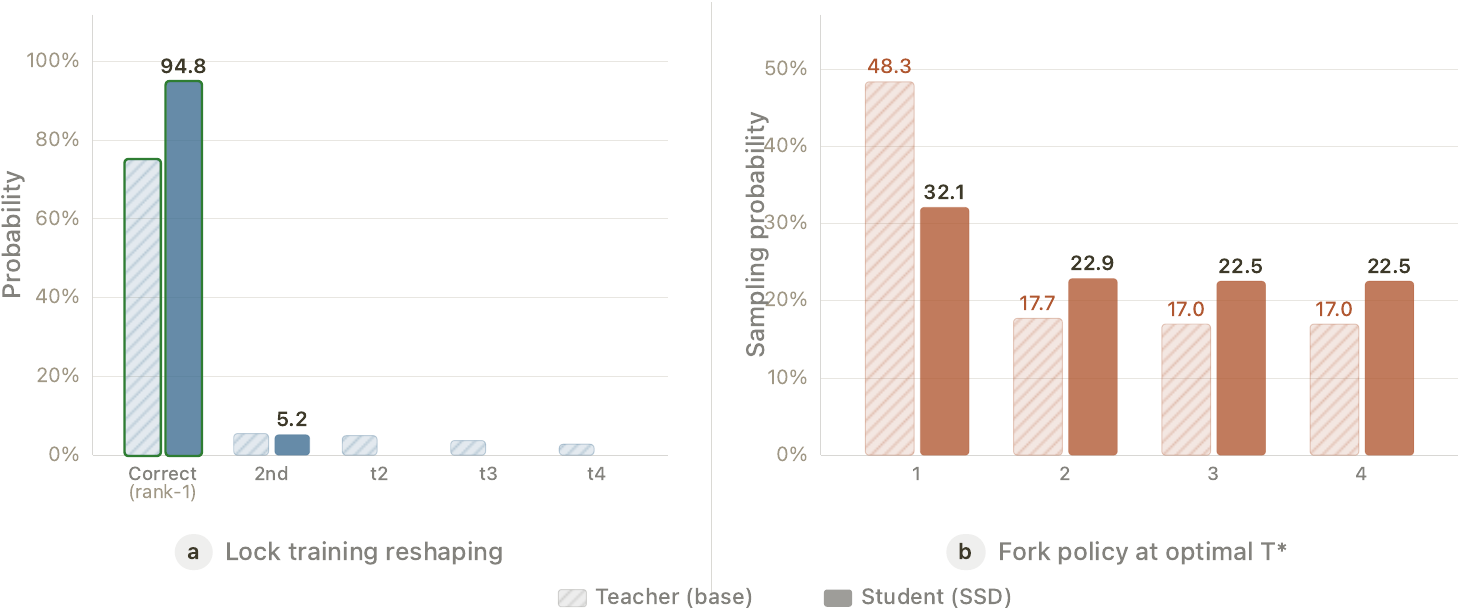}
\caption{\textbf{SSD reshapes lock distributions and flattens fork policies.}
\textbf{(a)}~Lock training reshaping: hatched bars show the teacher base distribution $p_0$, solid bars show the student $p_\theta$ after \SD{}. The correct token absorbs nearly all mass (94.8\%).
\textbf{(b)}~Fork operational policy at each model's optimal $T$ with top-$p{=}0.80$: the teacher has a descending four-token nucleus, while the student yields a flatter plateau that allocates more mass to the correct lower-ranked continuation.}
\label{fig:v19_training_spike}
\end{figure}

\textbf{The induced student is already asymmetric.}
Applying \SD{} in the toy with $T_{\textsf{train}}{=}0.9$ and top-$p{=}0.85$ produces a student whose retained support differs sharply across contexts. Locks collapse to a 2-token support: the correct token absorbs 94.8\% of mass with only a single runner-up at 5.2\%, while the remaining 14 tokens are pruned entirely (\cpanel{fig:v19_training_spike}{a}). Forks retain a broader 5-token support headed by tok1 at 34.4\%, with the correct token (tok0) at 16.9\% and three near-tied alternatives around 16\%; the other 11 tokens are removed. At the root, 4 tokens survive with the fail token still dominant at 40.6\%. This is the first key point of the toy: a single global training rule induces context-dependent reshaping automatically.

\textbf{The global optimum shifts upward after \SD{}.}
We evaluate the toy by sweeping a single global decoding temperature while fixing top-$p{=}0.80$. Because the FSM is known exactly, the resulting success probability can be computed in closed form:
\[
P = \bigl[q_{\text{root}}(\text{A}) + q_{\text{root}}(\text{B})\bigr] \cdot q_{\text{fork}}(\text{correct}) \cdot q_{\text{lock}}(\text{correct})^3,
\]
where each $q$ denotes the operational (post-truncation, post-temperature) probability of the correct continuation at the corresponding state. The teacher's best global success probability is 8.32\% at $T{=}0.639$; the student reaches 13.77\% at $T{=}2.091$, a gain of +5.4\,pp with the optimal temperature shifting roughly $3\times$ upward. At their respective optima, both models retain a four-token fork nucleus, but the teacher's is steeply descending ($[48.2,\, 17.8,\, 17.0,\, 17.0]\%$) while the student's is a near-uniform plateau ($[32.1,\, 22.9,\, 22.5,\, 22.5]\%$), allocating substantially more mass to the correct lower-ranked continuation. This is the toy analogue of the mechanism claim in the main text: after \SD{}, lock states become more resistant to evaluation-time temperature, so decoding can spend more of its budget on useful exploration at the fork. \Cref{fig:v19_storyboard} shows this shift directly across three temperature regimes.

\begin{figure}[p]
\centering
\includegraphics[width=\textwidth]{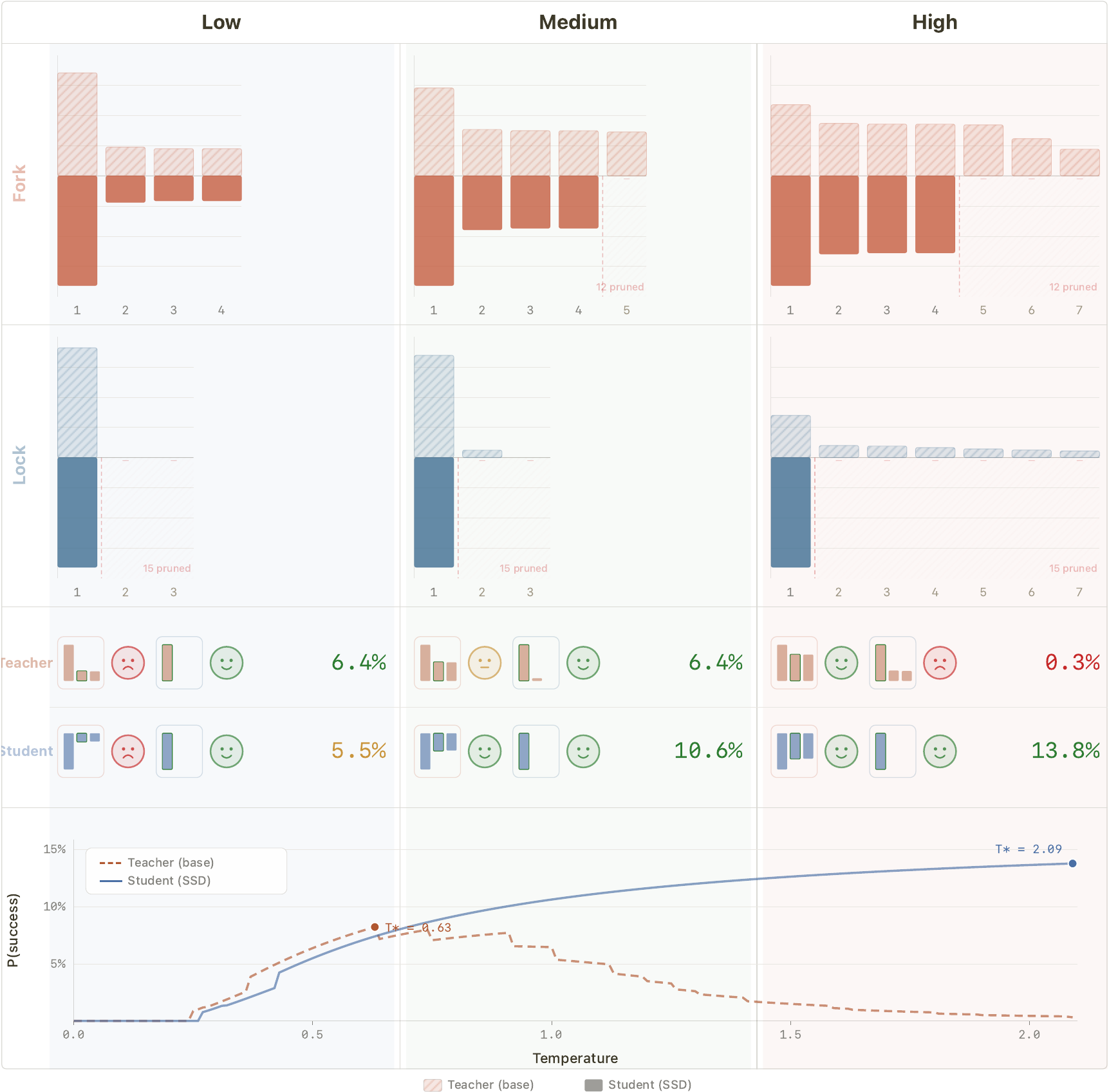}
\caption{\textbf{After \SD{}, the toy's globally optimal decoding regime shifts upward because locks become more temperature-inert while forks remain exploitable.}
The figure presents a complete view of the V19 toy simulation across three temperature regimes (columns: Low, Medium, High), organized as butterfly charts where teacher bars extend upward from the midline and student bars extend downward, with tokens ranked by descending teacher probability.
\textbf{Fork row:} At low temperature, the teacher's four-token head is peaked and the student's nucleus is narrower but similarly shaped. At medium temperature, both distributions broaden, and the student retains more mass on the correct lower-ranked continuation (rank~2). At high temperature, the student's plateau policy allocates mass nearly uniformly across the retained head, while the teacher's tail remains diffuse.
\textbf{Lock row:} The lock distribution is sharply peaked at the correct token (rank~1) under both teacher and student across all three temperatures. Training-time support compression collapses the student's lock to a 2-token support (94.8\% on correct), making it nearly temperature-inert.
\textbf{Readout:} For each temperature column, the readout shows per-state sparkline summaries and satisfaction indicators for both teacher and student, alongside the end-to-end success probability $P = [q_{\textsf{root}}(\textsf{A}) + q_{\textsf{root}}(\textsf{B})] \cdot q_{\textsf{fork}}(\textsf{correct}) \cdot q_{\textsf{lock}}(\textsf{correct})^3$. The student's advantage grows with temperature: from comparable at low~$T$ (5.5\% vs 6.4\%) to substantially better at high~$T$ (13.8\% vs 0.3\%).
\textbf{Curve:} The bottom panel plots exact success probability as a function of the global decoding temperature. The teacher (dashed) peaks at $T^*{=}0.63$ and declines sharply, while the student (solid) peaks at $T^*{=}2.09$ and remains competitive across a wide band.}
\label{fig:v19_storyboard}
\end{figure}

\textbf{The advantage is robust and visible at the fork.}
The student advantage is not an artifact of one carefully chosen truncation threshold. Repeating the grid search across top-$p \in \{0.65, 0.70, 0.75, 0.80, 0.85, 0.90\}$ leaves the student ahead throughout, with gaps ranging from +1.4\,pp (top-$p{=}0.90$) to +5.4\,pp (top-$p{=}0.80$). The same asymmetry also appears directly in the fork operational policy (\cpanel{fig:v19_training_spike}{b}): at each model's own optimum, the teacher has a descending four-token nucleus, while the student's is much closer to a plateau and assigns more mass to the correct lower-ranked continuation. Taken together, these analyses support all three qualitative pieces of the main-text story: safer locks, more usable fork-level diversity, and a higher globally optimal decoding regime after training.

\subsection{High-Temperature Case Study: Full Details and Additional Analyses}
\label{app:high_temp_details}

\Cref{sec:high_temp} presents a stress test in which \SD{} training uses $T_{\textsf{train}}{=}2.0$ with no top-$k$ or top-$p$ truncation. The purpose of the case study is to ask whether \SD{} still helps when the sampled training outputs are overwhelmingly poor as programs.

\textbf{Why this case matters.}
This setting directly tests a plausible alternative explanation for the paper's gains, namely that \SD{} works mainly because it trains on sampled programs that are already fairly good. By pushing the training distribution into a regime where that explanation should fail, the case study isolates the contribution of distributional reshaping from the contribution of superficial sample quality.

\textbf{The training corpus is deliberately poor.}
We generate one sample per prompt from Qwen3-30B-Instruct at $T_{\textsf{train}}{=}2.0$ with both top-$k$ and top-$p$ disabled, using the same prompt pool as in the main experiments. All outputs are retained without filtering. The resulting corpus is visibly poor: across the generation shards, only about ${\sim}37\%$ of outputs contain a chain-of-thought followed by an extractable code block, while about ${\sim}62\%$ contain no extractable code at all. Even seemingly coherent outputs often devolve into multilingual gibberish mid-sequence. By ordinary data-quality standards, this is far worse than the truncated setting used in the main experiments. Training otherwise uses the same infrastructure as the main Qwen3-30B-Instruct experiments, and the final training loss rises to 11.29, reflecting the much noisier targets.

\textbf{The student still improves across a broad region.}
Despite the poor training corpus, the resulting student improves materially. We evaluate every saved checkpoint from iterations 250 through 2{,}500 across ten values of $T_{\textsf{eval}} \in [0.6, 1.5]$, always using evaluation-time top-$k{=}20$ and top-$p{=}0.95$. This yields 100 checkpoint and temperature configurations in total, each evaluated with 10 repetitions. Of these, 62 exceed the 42.4\% frozen-base pass@1 baseline. The best configuration reaches 48.1\% pass@1 and 64.0\% pass@5, and the gains again concentrate on the hard subset. The key qualitative point is that the optimum is not a single isolated lucky cell: it lies inside a contiguous late-training ridge at low-to-moderate $T_{\textsf{eval}}$.

\begin{figure}[t]
\centering
\includegraphics[width=\textwidth]{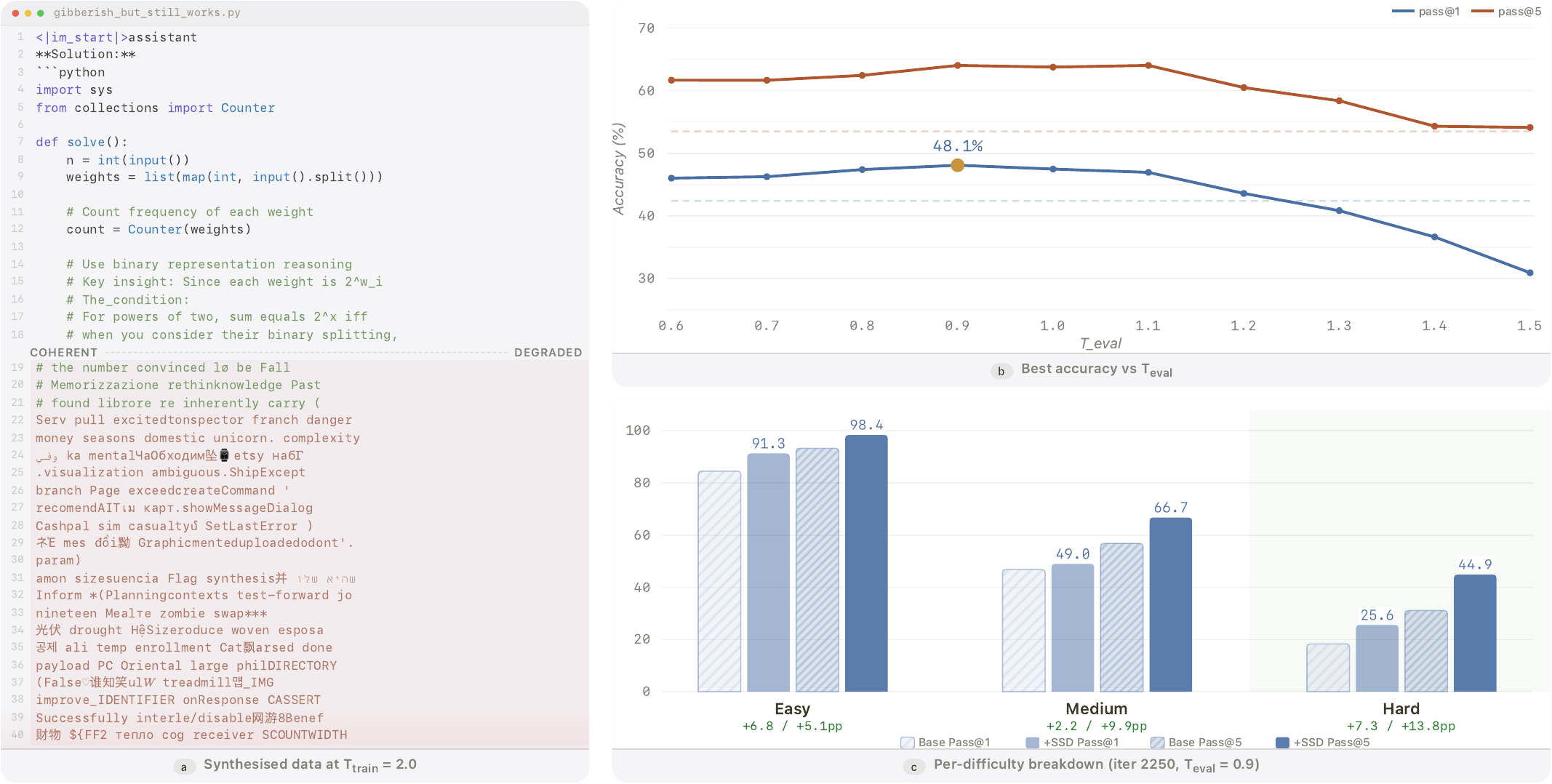}
\caption{\textbf{Expanded view of the high-temperature stress test ($T_{\textsf{train}}{=}2.0$, no truncation) on Qwen3-30B-Instruct.}
\textbf{(a)}~A representative training sample: the first 12 lines contain coherent Python, but the output degrades into multilingual gibberish by line~13; approximately 62\% of synthesized outputs contain no extractable code at all.
\textbf{(b)}~Best pass@1 (blue) and pass@5 (rust) across all checkpoints for each evaluation temperature $T_{\textsf{eval}} \in [0.6, 1.5]$; dashed lines mark the 42.4\% and 53.5\% base-model baselines. The peak at $T_{\textsf{eval}}{=}0.9$ reaches 48.1\% pass@1 and 64.0\% pass@5.
\textbf{(c)}~Per-difficulty breakdown at the best checkpoint (iteration~2250, $T_{\textsf{eval}}{=}0.9$): hatched bars show the base model, solid bars show +\SD{}. Gains concentrate on harder problems: easy +6.8/+5.1\,pp, medium +2.2/+9.9\,pp, hard +7.3/+13.8\,pp (pass@1/pass@5).}
\label{fig:high_temp_expanded}
\end{figure}

\textbf{The viable region is bounded, and that matters.}
The same grid also shows that the successful region is sharply bounded. Performance remains competitive for $T_{\textsf{eval}}$ roughly in the range $[0.6, 1.1]$, but degrades quickly once evaluation-time temperature becomes too high, falling below baseline at $T_{\textsf{eval}}{=}1.3$ and dropping further at $T_{\textsf{eval}}{=}1.5$. This pattern is consistent with the temperature-composition picture developed earlier in the paper. In this regime, training approximates a high-temperature reshaping of the teacher without training-time support compression, so evaluation succeeds only while the resulting effective temperature remains inside a viable band.

\textbf{Comparison with the standard truncated recipe.}
The stress test remains visibly weaker than the standard truncated \SD{} recipe, and that gap is itself informative. When truncation is present during training, support compression is active throughout optimization and directly suppresses distractor tails in the student. In the present case, those tails are not suppressed during training and must instead be cleaned up at evaluation time by top-$k$/top-$p$ truncation. The gains therefore remain real but smaller and more fragile. Taken together, this case study supports a narrower but important conclusion: even when the sampled programs are mostly poor, \SD{} can still help because the useful signal lies in distributional reshaping rather than in raw program correctness alone.

\bibliographystyle{plainnat}
\bibliography{references}

\end{document}